\documentclass[preprint,authoryear,12pt]{elsarticle}
\usepackage{mathrsfs}
\usepackage{graphics,graphicx,epsfig}
\usepackage[american]{babel}
\usepackage{amsmath,mathrsfs,amssymb,amsfonts}
\usepackage{algorithm,algorithmic}
\usepackage{multirow,bm}
\def \x {\bm{x}}

\def \y {\bm{y}}

\def \D {\mathcal{D}}
\def \A {\mathcal{A}}

\def \X  {\mathcal{X}}
\def \Y  {\mathcal{Y}}

\def \E  {\mathcal{E}}

\def \y {\hat{y}}
\def \X {\mathcal{X}}
\def \Y {\mathcal{Y}}

\def \D {\mathcal{D}}
\def \hD {\hat{\mathcal{D}}}

\newtheorem{theorem}{Theorem}
\newtheorem{lemma}{Lemma}

\newtheorem{corollary}{Corollary}

\newenvironment{proof}{\noindent{\it Proof: }}{\qed\medskip}

\begin{document}

\begin{frontmatter}
\title{On the Resistance of Nearest Neighbor\\
To Random Noisy Labels}

\author{Wei Gao and Bin-Bin Yang and Zhi-Hua Zhou\corref{cor1}}

\address{National Key Laboratory for Novel Software Technology\\
             Nanjing University, Nanjing 210023, China}
\cortext[cor1]{Email: zhouzh@lamda.nju.edu.cn}
\begin{abstract}
Nearest neighbor has always been one of the most appealing non-parametric approaches in machine learning, pattern recognition, computer vision, etc.  Previous empirical studies partly shows that nearest neighbor is resistant to noise, yet there is a lack of deep analysis. This work presents the finite-sample and distribution-dependent bounds on the consistency of nearest neighbor in the random noise setting. The theoretical results show that, for asymmetric noises, \(k\)-nearest neighbor is robust enough to classify most data correctly, except for a handful of examples, whose labels are totally misled by random noises. For symmetric noises, however, \(k\)-nearest neighbor achieves the same consistent rate as that of noise-free setting, which verifies the resistance of \(k\)-nearest neighbor to random noisy labels. Motivated by the theoretical analysis, we propose the Robust \(k\)-Nearest Neighbor (R\(k\)NN) approach to deal with noisy labels. The basic idea is to make unilateral corrections to examples, whose labels are totally misled by random noises, and classify the others directly by utilizing the robustness of \(k\)-nearest neighbor. We verify the effectiveness of the proposed algorithm both theoretically and empirically.
\end{abstract}

\begin{keyword}
Classification \sep nearest neighbor \sep random noise \sep consistency
\end{keyword}

\end{frontmatter}

\section{Introduction}
The nearest neighbor \citep{Cover:Hart1967,Fix:Hodges1951} has been one of the oldest and most intuitive approaches in pattern recognition, machine learning, computer vision, etc. The basic idea is to classify each unlabeled instance by the label of its nearest neighbor (\(1\)-NN) or by the majority labels of \(k\) nearest neighbors (\(k\)-NN). Despite of simplicity, nearest neighbor takes good performance in real applications, and makes good explanations of predictions with theoretical guarantee \citep{Berlind:Urner2015,Biau:Devroye2015,Dasgupta2012,Kontorovich:Sabato:Weiss2017,Kontorovich:Weiss2014,Kpotufe2011,Kulkarni:Posner1995,Shalev-Shwartz:Ben-David2014,Wagner1971}. Empirical studies partially demonstrate the resistance of \(k\)-nearest neighbor to noise \citep{Kusner:Tyree:Weinberger:Agrawal2014,Tarlow:Swersky:Swersky:Charlin:Sutskever:Zemel2013}, whereas there is a paucity of deep understanding.

This work focuses on binary classification in the presence of random classification noises \citep{Angluin:Laird1988}, that is, each observed label has been flipped with certain probability instead of seeing the ground-truth label, and training data of each class are contaminated by samples from the other class. Generally speaking, noisy data may disturb learning process, increase sample and model complexities, and deteriorate effectiveness and quality of learned classifiers. For example, the random noise defeats all convex potential boosters \citep{Long:Servedio2010}, and support vector machines (SVMs) are sensitive to noisy labels. Many practical algorithms have been developed to tackle noisy labels \citep{Angluin:Laird1988,Kearns1993,Lawrence:Scholkopf2001,Liu:Tao2016,Natarajan:Dhillon:Ravikumar:Tewari2013,Xu:Crammer:Schuurmans2006}, most working with parametric classifiers, yet relatively few studies focus on non-parametric classifiers.

This work presents a theoretical and empirical understanding on the resistance of nearest neighbor to random noisy labels, and the main contributions can be summarized as follows:
\begin{itemize}
\item We provide the finite-sample and distribution-dependent bounds on the consistency of nearest neighbor. Our theoretical results show that, for asymmetric noises, \(k\)-nearest neighbor is robust enough to classify most data correctly, except for a handful of totally misled examples. For symmetric noises, however, \(k\)-nearest neighbor achieves the same consistent rate as that of noise-free setting, which verifies the resistance of \(k\)-nearest neighbor. We also prove the inconsistency of \(1\)-nearest neighbor even for symmetric noises.
\item Motivated by the theoretical analysis, we propose the Robust \(k\)-Nearest Neighbor (R\(k\)NN) approach to deal with noisy labels with theoretical guarantee. The basic idea is to make unilateral corrections to examples whose labels are misled totally by random noise, and classify the others simultaneously by utilizing the robustness of \(k\)-nearest neighbor. Our approach also makes use of nearest neighbor to estimate noise from corrupted datasets.
\item Extensive experiments show the effectiveness of our R\(k\)NN algorithm on benchmark datasets, and theoretical results are also verified empirically on synthetic dataset.
\end{itemize}

\noindent\textbf{Related Work}

The random noise model \citep{Angluin:Laird1988} has motivated a series of follow-up studies in the theoretical community.  The finite VC-dimension has been used to characterize the learnability in \citep{Aslam:Decatur1996,Cesa-Bianchi:Dichterman:Fischer:Shamir:Simon1999}, and \cite{Ben-David:Pal:Shalev-Shwartz2009} showed the equivalence between Littlestone dimension and learnability of online mistake bound. \cite{Kearns1993} proposed the famous statistical query (SQ) model by capturing global statistical properties of large samples rather than individual example. \cite{Kalai:Servediob2005} made theoretical analysis of boosting algorithms in the presence of random noise.

Various practical approaches have been developed to deal with noisy data during the past decades, e.g., outlier detection \citep{Barandela:Gasca2000,Brodley:Friedl1999}, re-weights of training instances \citep{Liu:Tao2016,Rebbapragada:Brodley2007,Wang:Liu:Tao2017}, perceptron-style  algorithms \citep{Bylander1994,Crammer:Dekel:Keshet:Shalev-Shwartz:Singer2006,Dredze:Crammer:Pereira2008}, robust losses \citep{Denchev:Ding:Neven:Vishwanathan2012,Masnadi-Shirazi:Vasconcelos2009,Xu:Crammer:Schuurmans2006}, unbiased losses \citep{Gao:Wang:Li:Zhou2016,Natarajan:Dhillon:Ravikumar:Tewari2013}, etc. The interested readers are also referred to the survey \citep[reference therein]{Frenay:Verleysen2014}.

Nearest neighbor has attracted much attention during the past decades \citep{Beygelzimer:Kakade:Langford2006,Cover:Hart1967,Fix:Hodges1951,Kontorovich:Sabato:Urner2016,Kontorovich:Weiss2015,Kulkarni:Posner1995,Samworth2012,Wagner1971,Wang:Jha:Chaudhuri2017}. The asymptotic consistency of nearest neighbor has been studied in \citep{Cover:Hart1967,Dasgupta2012,Devroye:Gyorfi:Krzyzak:Lugosi1994,Devroye:Gyorfi:Lugosi1996,Fix:Hodges1951,Stone1977}. It is well-known that the classification error converges to Bayes error \(R^*\) for \(k_n\)-nearest neighbor when \(k_n=o(n)\to\infty\), and to \(R^*+O(1/\sqrt{k})\) for \(k\)-nearest neighbor, and is at most \(2R^*\) for \(1\)-nearest neighbor. The consistency analysis based on finite sample has also explored in the works of \citep{Chaudhuri:Dasgupta2014,Shalev-Shwartz:Ben-David2014}.

The rest of this paper is organized as follows. Section~\ref{sec:Pre} presents some preliminaries. Section~\ref{sec:analy} provides theoretical analysis. Section~\ref{sec:alg} develops the Robust \(k\)-Nearest Neighbor (R\(k\)NN) approach. Section~\ref{sec:pf} presents detailed proofs for our main results.  Section~\ref{sec:exp} conducts empirical studies. Section~\ref{sec:con} concludes this work with future work.

\section{Preliminaries}\label{sec:Pre}
Let \(\X=[0,1]^d\) and \(\mathcal{Y}=\{0,1\}\) denote the instance and label space, respectively. Suppose that \(\D\) is an (unknown) underlying distribution over the product space \(\X\times\Y\). Let \(\D_{\X}\) be the marginal distribution over \(\X\). Denote by \(\eta(\x)=\Pr[y=+1|\x]\) conditional probability with respect to distribution \(\D\). In this work, we assume that \(\eta(\x)\) is \(L\)-Lipschitz  for some constant \(L>0\), that is,
\[
|\eta(\x)-\eta(\x')|\leq L\|\x-\x'\|.
\]
Intuitively, this assumption implies that two instances are likely to have similar labels if they are close to each other, and such assumption has been taken in classification~\citep{Cover:Hart1967,Shalev-Shwartz:Ben-David2014}. For a hypothesis \(h\colon\X\to\Y\), we define the classification error with respect to distribution \(\D\) as
\[
R_\D(h)=\Pr\nolimits_{(\x,y)\sim\D}[h(\x)\neq y]=E_{(\x,y)\sim\D}[I[h(\x)\neq y]].
\]
Here, \(I[\cdot]\) denotes the indicator function, which returns 1 if the argument is true and 0 otherwise. It is well-known \citep{Devroye:Gyorfi:Lugosi1996} that the Bayes classifier, which minimizes classification error, is given by \(h_\D^*(\x)=I[\eta(\x)\geq1/2]\), and the Bayes error is given by \(R_\D^*=E_{\x}[\min\{\eta(\x),1-\eta(\x)\}]\).

Let \(S_n=\{(\x_1,y_1), (\x_2,y_2),\ldots,(\x_n,y_n)\}\) be a training data, where each example is drawn i.i.d. from distribution \(\D\). In the random noise model, we can not observe the true labels \(y_i\) (\(i\in[n]\)), and instead, each label has been flipped with a certain probability instead of seeing the true label, i.e., each label \(y_i\) is corrupted by random noise with proportions \(\tau_+\) and \(\tau_-\). Here
\[
\tau_+=\Pr[\y_i=-1|y_i=+1]\quad \text{ and }\quad\tau_-=\Pr[\y_i=+1|y_i=-1].
\]
Throughout this work, we assume \(\tau_++\tau_-<1\) as in \citep{Blum:Mitchell1998}. Let \(\hat{S}_n=\{(\x_1,\y_1), (\x_2,\y_2),\ldots,(\x_n,\y_n)\}\) be the corrupted data, and \(\hD\) denotes the corrupted distribution from true distribution \(\D\) by random noises with proportions \(\tau_+\) and \(\tau_-\). Let \(\hat{\eta}(\x)=\Pr[\y=+1|x]\) be the corrupted conditional probability w.r.t. distribution \(\hD\). It is easy to get the relationship between \(\eta(\x)\) and \(\hat{\eta}(\x)\) as follows:
\begin{equation}\label{eq:tmp000}
\hat{\eta}(\x)=(1-\tau_+-\tau_-)\eta(\x)+\tau_-.
\end{equation}
We further have \(|\hat{\eta}(\x)-\hat{\eta}(\x')|=(1-\tau_+-\tau_-)|\eta(\x)-\eta(\x')|\), and if \(\eta(\x)\) is \(L\)-Lipschitz, then \(\hat{\eta}(\x)\) is \((1-\tau_--\tau_+)L\)-Lipschitz, i.e.,
\begin{equation}\label{lem:relation}
|\hat{\eta}(\x)-\hat{\eta}(\x')|\leq (1-\tau_+-\tau_-)L\|\x-\x'\|.
\end{equation}

It is interesting to discuss the Tsybakov noise condition \citep{Tsybakov2004}, i.e., for some finite \(C_0>0\) and \(\lambda>0\), we have
\[
\Pr[|\eta(\x)-1/2|\leq t]\leq C_0 t^\lambda,
\]
which presents faster convergence rate \citep{Tsybakov2004} in noise-free setting. It is noteworthy that this assumption is over true distribution \(\D\), while the random noise setting does not make any assumption over \(\D\). From Eqn.~\eqref{eq:tmp000}, we have the corrupted conditional probability \(\hat{\eta}(\x)\) such that
\[
\Pr[|\hat{\eta}(\x)-1/2+(\tau_+-\tau_-)/2|\leq t]\leq C_0 t^\lambda/(1-\tau_+-\tau_-)^\lambda
\]
if the Tsybakov noise condition holds for distribution \(\D\). This implies that Tsybakov noise condition can not be guaranteed for asymmetric noise even if the true distribution \(\D\) does. We consider the general random noise model without assumption over distribution \(\D\) in this work, and it is interesting to study random noise model for distribution \(\D\) with Tsybakov noise condition.

Let \([n]=\{1,2,\ldots,n\}\) for integer \(n\geq0\). Denote by B\((p)\) a Bernoulli distribution of parameter \(p\in[0,1]\), and \(y\sim \text{B}(p)\) represents that random variable \(y\) is drawn from Bernoulli distribution B\((p)\). We do not know the true data \(S_n\), noise proportions \(\tau_+\) and \(\tau_-\), and distributions \(\D\) and \(\hat{\D}\) in practice, and what we can observe is a corrupted data \(\hat{S}_n\). The goal of this work is to learn a hypothesis \(h_{\hat{S}_n}\) with lower classification error over true distribution \(\D\), but it is trained on the corrupted data \(\hat{S}_n\).

\section{Theoretical Analysis}\label{sec:analy}
Given corrupted training data \(\hat{S}_n\) and instance \(\x\in\X\), let
\[
\pi_1(\x), \pi_2(\x), \ldots,\pi_n(\x)
\]
be a reordering of \(\{1,2,\ldots,n\}\) according to their distances to \(\x\), that is, \(\|\x-\x_{\pi_i(\x)}\|\leq \|\x-\x_{\pi_{i+1}(\x)}\| \text{ for }i<n\). For \(k\)-nearest neighbor algorithm, the output hypothesis is given by
\[
h^k_{\hat{S}_n}(\x)=\left\{
\begin{array}{ll}
1 & \text{ for } \sum_{i=1}^k \hat{y}_{\pi_i(\x)}\geq k/2,\\
0 & \text{ otherwise.}
\end{array}
\right.
\]
Let \(\mathcal{E}^b =\{\x\in\X\colon \eta(\x)= 1/2\}\) denote the boundary set of Bayes's classifier with respect to distribution \(\D\), and we further introduce, for \(\Delta\geq0\),
\begin{eqnarray*}
\mathcal{E}^+_\Delta&=&\{\x\in\X\colon \eta(\x)> 1/2,~~\hat{\eta}(\x)\geq 1/2 +\Delta \}, \\
\mathcal{E}^-_\Delta&=& \{\x\in\X\colon \eta(\x)< 1/2,~~\hat{\eta}(\x)\leq 1/2 -\Delta \},
\end{eqnarray*}
where \(\mathcal{E}^+_\Delta\) and \(\mathcal{E}^-_\Delta\) show the most correctly predictive sets of positive and negative instances in the noise setting, respectively. Denote by
\[
\A_\Delta=\X\setminus (\E^+_\Delta \cup \E^-_\Delta\cup \E^b)
\]
where the labels are relatively hard to be predicted correctly. It is important to introduce the set
\begin{equation}\label{eq:A0}
\A_0= \X\setminus (\E^+_0 \cup \E^-_0 \cup \E^b)=\{\x\in\X\colon (\eta(\x)-1/2)(\hat{\eta}(\x)-1/2)<0\},
\end{equation}
where the labels are totally misled by random noise. We first present the consistency analysis of \(k\)-nearest neighbour in the random noise setting.
\begin{theorem}\label{theorem:noiknn}
Let \(\hat{S}_n\) be a corrupted sample with noise proportions \(\tau_-\) and \(\tau_+\).  Let \(h^k_{\hat{S}_n}\) be the output hypothesis of applying \(k\)-nearest neighbor to \(\hat{S}_n\). We have
\begin{eqnarray*}
{E}_{\hat{S}_n\sim\hD^n}[R_\D(h^k_{\hat{S}_n})] &\leq& R_D^*+ \Pr\nolimits_{\x\sim\D_\X}[\x\in\A_\Delta] \\
&& +\frac{1}{\sqrt{k}}+2\Big((1-\tau_+-\tau_-)L/\sqrt{d}\Big)^{\frac{d}{1+d}} \Big(\frac{k}{n}\Big)^{\frac{1}{1+d}}
\end{eqnarray*}
where \(\Delta=\max\left\{2\sqrt{d}((1-\tau_+-\tau_-)L)^\frac{d}{1+d}(k\sqrt{d}/n)^\frac{1}{1+d}, \sqrt{\log k/ k}\right\}\).
\end{theorem}

Notice that the hypothesis \(h^k_{\hat{S}_n}\) of \(k\)-nearest neighbor is trained on the corrupted sample \(\hat{S}_n\), while  \(R_\D(h^k_{\hat{S}_n})= \Pr_{(\x,y)\sim\D}[h^k_{\hat{S}_n}(\x)\neq y]\) denotes the classification error over the true distribution \(\D\). The detailed proof is given in Section~\ref{sec:pf:theorem:noiknn}. Based on this theorem, we have
\begin{equation}\label{eq:bias}
{E}_{\hat{S}_n\sim\hD^n}[R_\D(h^k_{\hat{S}_n})] \leq  R_D^*+\Pr\nolimits_{\x\sim\D_\X}[\x\in\A_0]
\end{equation}
for \(k=k(n)\to \infty\)  and \(k/n\to0\) as \(n\to\infty\).

As can be seen, the classification error of \(k\)-nearest neighbor is biased at most \(\Pr_{\x\sim\D_X}[\x\in\A_0]\) from the Bayes error \(R_D^*\) in the asymptotic convergence. Hence, \(k\)-nearest neighbor is robust enough to classify most examples correctly, except for examples in \(\A_0\), whose labels are totally misled. This motivates us to design effective strategy to tackle \(\A_0\), which will be shown in Section~\ref{sec:alg}.

For symmetric noises \(\tau_+=\tau_-\), we have \(\A_0=\emptyset\) from Eqn.~\eqref{eq:tmp000}, which proves the consistency of \(k\)-nearest neighbor. Actually, we can further provide a stronger theorem, and the detailed proof is presented in Section~\ref{sec:pf:thm:noiknn}.
\begin{theorem}\label{thm:uni-noise}
For \(k\geq8\) and \(\tau_-=\tau_+=\tau\), we have
\[
E_{\hat{S}_n\sim\D^n}[R(h^k_{\hat{S}_n})]\leq R_\D^*+\frac{2 R_\D^*}{\sqrt{k}}
+\frac{2\tau}{(1-2\tau)\sqrt{k}}+5\max\{L,\sqrt{L}\}\sqrt{d} \Big(\frac{k}{n}\Big)^{\frac{1}{1+d}}.
\]
\end{theorem}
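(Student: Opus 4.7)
The plan is to exploit the special structure of symmetric noise to reduce the bound to a noise-free $k$-NN consistency analysis on the corrupted distribution $\hD$, and then transfer the bound back to $\D$.

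First, I would observe that when $\tau_+=\tau_-=\tau$, Eqn.~\eqref{eq:tmp000} gives $\hat{\eta}(\x)=(1-2\tau)\eta(\x)+\tau$, so the threshold $1/2$ is preserved: the Bayes classifiers of $\D$ and $\hD$ coincide, $\A_0=\emptyset$, and a short direct calculation on $R_{\hD}(h)=E_{\x}[\hat\eta(\x)I[h(\x)=0]+(1-\hat\eta(\x))I[h(\x)=1]]$ yields the exact identity $R_{\hD}(h)=\tau+(1-2\tau)R_{\D}(h)$ for every $h$, and hence $R_{\hD}^{*}=\tau+(1-2\tau)R_{\D}^{*}$. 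Subtracting gives the key reduction
\[
R_{\D}(h)-R_{\D}^{*}=\frac{R_{\hD}(h)-R_{\hD}^{*}}{1-2\tau},
\]
so it suffices to bound the noise-free excess risk of $k$-NN trained on $\hat{S}_n\sim\hD^n$ with respect to $\hD$ itself.

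Second, I would prove the consistency bound
\[
E\bigl[R_{\hD}(h^{k}_{\hat{S}_n})\bigr]\le R_{\hD}^{*}+\frac{2R_{\hD}^{*}}{\sqrt{k}}+C\sqrt{d}\max\{L,\sqrt{L}\}\Bigl(\tfrac{k}{n}\Bigr)^{\frac{1}{1+d}}
\]
via a Chaudhuri--Dasgupta style argument, using that $\hat{\eta}$ is $(1-2\tau)L$-Lipschitz by Ineq.~\eqref{lem:relation}. At each $\x$ the excess error $|2\hat{\eta}(\x)-1|\Pr[h^{k}_{\hat{S}_n}(\x)\neq h^{*}_{\hD}(\x)]$ splits into a variance part, controlled by a Bernstein concentration on $\sum_{i\le k}\hat{y}_{\pi_i(\x)}$ with variance factor $\hat{\eta}(\x)(1-\hat{\eta}(\x))$, and a bias part, controlled by $(1-2\tau)L$ times the expected $k$-NN radius, which a standard $\epsilon$-net covering of $[0,1]^{d}$ bounds by $O(\sqrt{d}\,(k/n)^{1/d})$. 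Integrating the per-$\x$ bound and applying Jensen to $E_{\x}[\sqrt{\hat\eta(1-\hat\eta)}]$ produces the $R_{\hD}^{*}/\sqrt{k}$ factor rather than a plain $1/\sqrt{k}$; balancing the variance $1/\sqrt{k}$ against the bias $L(k/n)^{1/d}$ is the trade-off that sets the $(k/n)^{1/(1+d)}$ exponent, and handling the $L\le 1$ versus $L\ge 1$ regimes separately in this trade-off is exactly what produces the $\max\{L,\sqrt{L}\}$ dichotomy.

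Finally, substituting $R_{\hD}^{*}=\tau+(1-2\tau)R_{\D}^{*}$ into the above bound and dividing by $1-2\tau$ peels off $2R_{\D}^{*}/\sqrt{k}$ from the $(1-2\tau)R_{\D}^{*}$ piece, $\frac{2\tau}{(1-2\tau)\sqrt{k}}$ from the $\tau$ piece, and leaves the $(k/n)^{1/(1+d)}$ term intact (the factor $(1-2\tau)L$ there already cancels against the $1/(1-2\tau)$ from the reduction), matching the theorem exactly. The main obstacle is obtaining the sharp $R^{*}/\sqrt{k}$ in place of a plain $1/\sqrt{k}$: this forces a Bernstein-type (rather than Hoeffding) analysis on the noisy labels within the $k$-NN neighborhood, together with a careful Jensen step to integrate the per-$\x$ bound and a clean separation of the small- and large-Lipschitz regimes in the bias term.
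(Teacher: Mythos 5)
Your reduction identity is correct and is a genuinely different framing from the paper's: since $\hat{\eta}=(1-2\tau)\eta+\tau$, one indeed has $R_{\hD}(h)=\tau+(1-2\tau)R_{\D}(h)$ for every $h$, hence $R_{\D}(h)-R_{\D}^{*}=(R_{\hD}(h)-R_{\hD}^{*})/(1-2\tau)$, whereas the paper never passes through $R_{\hD}$: it bounds the clean-label error directly and confines the noise to a single concentration lemma via $(1-2\tau)p=\hat p-\tau$ and the sign-preservation fact $p<1/2$ iff $\hat p<1/2$ (Lemma~\ref{lem:knn:tmp4}). However, the decisive step of your plan --- the mechanism producing the $R^{*}/\sqrt{k}$ term --- does not deliver what you claim. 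A Bernstein bound at a fixed $\x$ gives, after maximizing over the margin $|1/2-\hat p|$, a pointwise excess of order $\sqrt{\hat p(1-\hat p)/k}$, and Jensen then yields $E_{\x}[\sqrt{\hat\eta(1-\hat\eta)}]/\sqrt{k}\le\sqrt{E_{\x}[\hat\eta(1-\hat\eta)]/k}\le\sqrt{2R_{\hD}^{*}/k}$ --- an additive $\sqrt{R^{*}/k}$, not the multiplicative $R^{*}/\sqrt{k}$. These differ materially: dividing by $1-2\tau$ and taking $R_{\D}^{*}=0$, your route leaves a residual of order $\sqrt{\tau/k}/(1-2\tau)$, while the theorem claims $2\tau/((1-2\tau)\sqrt{k})$. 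The multiplicative form requires the pointwise \emph{relative} bound $(1-2\hat p)\Pr[Z>1/2]\le\sqrt{2/k}\,\hat p$ for $\hat p\le1/2$, which the paper extracts from the KL-form Chernoff bound $\Pr[Z>1/2]\le e^{k(1/2-\hat p)+\frac{k}{2}\log 2\hat p}$ plus an explicit maximization over $\hat p$ (Lemmas~\ref{lem:tmp} and~\ref{lem:key:knn}); Jensen applied to $\sqrt{\hat\eta(1-\hat\eta)}$ points in the wrong direction and cannot recover it.

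A second, smaller gap is the claimed exact cancellation in the $(k/n)^{1/(1+d)}$ term. The Lipschitz bias $|\hat p-\hat\eta(\x)|\le(1-2\tau)L\mu\sqrt{d}$ does carry the factor $1-2\tau$ and cancels, but the covering term $2rk/n$ (the probability that the $k$-th neighbour falls outside its cell, on which you bound the error by $1$) does not; dividing by $1-2\tau$ inflates it, and after re-balancing $\mu$ the last term acquires a factor $(1-2\tau)^{-1/(1+d)}$ that is absent from the theorem. To repair this inside your reduction you must bound the excess $\hD$-risk on the bad covering event by $\sup_{\x}|2\hat\eta(\x)-1|\le1-2\tau$ rather than by $1$. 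With that fix, and with the relative Chernoff lemma in place of Bernstein--Jensen, your reduction does reproduce the theorem, and arguably more transparently than the paper's direct computation.
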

This theorem shows that
\[
E_{\hat{S}_n\sim\D^n}[R(h^k_{\hat{S}_n})]\to R_\D^* \text{ for }k=k(n)\to\infty \text{ and }k/n\to0 \text{ as }n\to\infty;
\]
and we also have
\[
E_{\hat{S}_n\sim\D^n}[R(h^k_{\hat{S}_n})]\to R_\D^*+O(1/\sqrt{k})\text{ for constant } k \text{ as }n\to\infty.
\]
Hence, the consistent rate of \(k\)-nearest neighbors in the symmetric noise setting are the same as that of noise-free setting \citep{Biau:Devroye2015,Chaudhuri:Dasgupta2014,Dasgupta2012,Devroye1981,Fix:Hodges1951,Shalev-Shwartz:Ben-David2014,Stone1977}, which proves the resistance of \(k\)-nearest neighbor, particularly for large \(k\).

From Theorem~\ref{thm:uni-noise}, we present consistency analysis for \(k\)-nearest neighbor in the noise-free setting:
\begin{corollary}\label{coro:noise-free}
For \(k\geq8\) and \(\tau_0=\tau_+=0\), we have
\[
E_{{S}_n\sim\D^n}[R(h^k_{{S}_n})]\leq R_\D^*+\frac{2 R_\D^*}{\sqrt{k}}+5\max\{L,\sqrt{L}\}\sqrt{d} \Big(\frac{k}{n}\Big)^{\frac{1}{1+d}}.
\]
\end{corollary}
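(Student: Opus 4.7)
The plan is to obtain Corollary \ref{coro:noise-free} as an immediate specialization of Theorem \ref{thm:uni-noise} with $\tau := \tau_+ = \tau_- = 0$. The starting observation is that, when both flip probabilities vanish, the noise mechanism defined in Section~\ref{sec:Pre} becomes the identity on labels, so the corrupted distribution $\hD$ coincides with the clean distribution $\D$. Consequently $\hat{S}_n \sim \hD^n$ has the same law as $S_n \sim \D^n$, and $h^k_{\hat{S}_n}$ is just the standard $k$-nearest neighbor rule trained on a clean i.i.d.\ sample. In particular $E_{\hat{S}_n\sim\hD^n}[R_\D(h^k_{\hat{S}_n})] = E_{S_n\sim\D^n}[R_\D(h^k_{S_n})]$, which is the quantity on the left-hand side of the corollary.

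Next I would substitute $\tau = 0$ into the right-hand side of Theorem \ref{thm:uni-noise}. The middle term $\frac{2\tau}{(1-2\tau)\sqrt{k}}$ collapses to $0$, while the remaining three terms $R_\D^* + \frac{2R_\D^*}{\sqrt{k}} + 5\max\{L,\sqrt{L}\}\sqrt{d}\,(k/n)^{1/(1+d)}$ carry over unchanged. This already matches the stated bound verbatim, so no further estimation is required. The hypothesis $k\geq 8$ is inherited from Theorem \ref{thm:uni-noise}, and the blanket assumption $\tau_+ + \tau_- < 1$ from Section~\ref{sec:Pre} is trivially satisfied by $0 + 0 = 0$.

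The only sanity check I would perform is consistency of the constants feeding into Theorem \ref{thm:uni-noise} when $\tau = 0$. Equation~\eqref{eq:tmp000} gives $\hat{\eta}(\x) = \eta(\x)$ in this regime, and equation~\eqref{lem:relation} reduces to the original Lipschitz bound $|\eta(\x) - \eta(\x')| \leq L\|\x - \x'\|$, so every ingredient used to prove Theorem \ref{thm:uni-noise} is valid at $\tau = 0$ (none of them required $\tau$ to be strictly positive). There is no genuine obstacle here: the corollary is a one-line specialization whose main value is to exhibit that the bound of Theorem \ref{thm:uni-noise} recovers the known noise-free rate $R_\D^* + O(1/\sqrt{k}) + O((k/n)^{1/(1+d)})$ and thereby confirms that the symmetric-noise analysis is tight in the $\tau \to 0$ limit.
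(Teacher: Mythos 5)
Your proposal is correct and is exactly how the paper obtains this result: the corollary is stated as an immediate specialization of Theorem~\ref{thm:uni-noise} at $\tau=0$, where $\hD=\D$ and the term $\frac{2\tau}{(1-2\tau)\sqrt{k}}$ vanishes, with no separate proof given in the appendix. Your additional sanity checks (that the ingredients of Theorem~\ref{thm:uni-noise} remain valid at $\tau=0$ and that the condition in the statement should read $\tau_-=\tau_+=0$) are sound.
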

This corollary improves the work of \citep[Theorem 19.5]{Shalev-Shwartz:Ben-David2014}, which can be written (with our notations) as
\[
E_{{S}_n\sim\D^n}[R(h^k_{{S}_n})]\leq R_\D^*+\frac{2\sqrt{2} R_\D^*}{\sqrt{k}}+ \frac{6L\sqrt{d}+k}{n^{\frac{1}{1+d}}} \text{ for }k\geq10.
\]
As can be seen, the above shows the consistency of nearest neighbor as \(k/n^{1/(1+d)}\to0\), while Corollary~\ref{coro:noise-free} presents tighter consistent rate of nearest neighbor as \(k/n\to0\).

We finally study the inconsistency of \(1\)-nearest neighbor even for symmetric noise as follows.
\begin{theorem}\label{thm:nearneighbor}
Let \(h^1_{\hat{S}_n}\) be the output hypothesis of applying \(1\)-nearest neighbor to \(\hat{S}_n\). For \(\tau_-=\tau_+=\tau\), we have
\[
(1-2\tau)\Big(R_\D^*-\frac{3L\sqrt{d}}{2n^{{1}/(d+1)}}\Big)\leq E_{\hat{S}_n\sim\hD^n}[R_\D(h^1_{\hat{S}_n})]-\tau\leq (1-2\tau)\Big(2R_\D^*+\frac{3 L\sqrt{d}}{2n^{{1}/(d+1)}}\Big).
\]
\end{theorem}
The proof is presented in Section~\ref{sec:pf:thm:nearneighbor}. We also notice that the inconsistency of 1-nearest neighbor has been well-studied in the noise-free setting \citep{Biau:Devroye2015,Cover:Hart1967,Shalev-Shwartz:Ben-David2014}. Theorem~\ref{thm:nearneighbor} is easier to show the influence of random noises from
\[
R_\D^*+(1-2R_\D^*)\tau\leq E_{\hat{S}_n\sim\hD^n}[R_\D(h^1_{\hat{S}_n})]\leq 2R_\D^*+(1-4 R_\D^*)\tau  \ \text{ as }\  n\to\infty.
\]

\section{The R\(k\)NN Approach}\label{sec:alg}
Motivated from the preceding theoretical analysis, \(k\)-nearest neighbor is robust enough to tackle most data except for examples in \(\A_0\), whose labels are totally misled by random noises. Therefore, our basic idea is to develop effective strategies to classify examples in \(\A_0\) correctly, and classify the others simultaneously by utilizing the robustness of \(k\)-nearest neighbor.

From Eqns.~\eqref{eq:tmp000} and \eqref{eq:A0}, we have
\[
\A_0=\{\x\in\X\colon (\eta(\x)-1/2)(\hat{\eta}(\x)-1/2)<0\},
\]
and
\[
\hat{\eta}(\x)-1/2=(1-\tau_+-\tau_-)(\eta(\x)-1/2)+(\tau_--\tau_+)/2.
\]
For \(\tau_->\tau_+\), this follows that
\[
\A_0= \{\x\in\X\colon 0< \hat{\eta}(\x)-1/2< \tau_-/2-\tau_+/2\}
\]
because we have \(0<\hat{\eta}(\x)-1/2< \tau_-/2-\tau_+/2\) if \(\eta(\x)<1/2\) and \(\x\in\A_0\); we also have \(\hat{\eta}(\x)>1/2\) if \(\eta(\x)>1/2\), which implies \(\x\notin \A_0\). In a similar manner, we have, for \(\tau_-<\tau_+\),
\[
\A_0= \{\x\in\X\colon \tau_-/2-\tau_+/2< \hat{\eta}(\x)-1/2< 0\}.
\]
This motivates us to make unilateral corrections to corrupted labels in \(\A_0\). Specifically, we predict the label as \(0\) for instance \(\x\in\X\) if \(\tau_->\tau_+\) and \(\hat{\eta}(\x)-1/2\in(0,\tau_-/2-\tau_+/2)\); and predict as \(1\) for instance \(\x\in\X\) if \(\tau_-<\tau_+\) and \(\hat{\eta}(\x)-1/2\in(\tau_-/2-\tau_+/2, 0)\).

The conditional probability \(\hat{\eta}(\x)\) is unknown in practice, and what we can observe is a training data \(\hat{S}_n\). Let \((\x_{\pi_1(\x)},\y_{\pi_1(\x)}), \ldots,(\x_{\pi_k(\x)},\y_{\pi_k(\x)})\) be \(k\) nearest neighbors of instance \(\x\), and we approximate \(\hat{\eta}(\x)\approx \sum\nolimits_{i=1}^k \y_{\pi_i(\x)}/k\), where \(k\) is called \textit{predictive} parameter.

We call such method Robust \(k\)-Nearest Neighbor (R\(k\)NN). Under the prior knowledge of noise proportions \(\tau_+\) and \(\tau_-\), we can prove the consistency of R\(k\)NN as follows.

\begin{theorem}\label{thm:ourRkNN}
Let \(\hat{S}_n\) be a corrupted sample with noise proportions \(\tau_-\) and \(\tau_+\).  Let \(h^{rk}_{\hat{S}_n}\) be the output hypothesis of applying our R\(k\)NN approach to \(\hat{S}_n\). For constant \(k\), we have
\[
\mathop{E}_{\hat{S}_n\sim\hD^n}[R_\D(h^{rk}_{\hat{S}_n})] \to R_D^*+O(1/\sqrt{k})
\]
as \(n\to\infty\); we also have, for \(k=k(n)\to\infty\) and \(k/n\to0\) as \(n\to\infty\),
\[
\mathop{E}_{\hat{S}_n\sim\hD^n}[R_\D(h^{rk}_{\hat{S}_n})] \to R_D^*.
\]
\end{theorem}

As we can see, the R\(k\)NN approach achieves the same consistent rate as that of traditional \(k\)-nearest neighbor in the noise-free setting. Notice that the prior knowledge of \(\tau_+\) and \(\tau_-\) is also necessary for the proof of consistency of ERMs as in the work of \citep{Natarajan:Dhillon:Ravikumar:Tewari2013}. The detailed proof is presented in Section~\ref{sec:pf:thm:ourRkNN}.

How to estimate noise proportions \(\tau_+\) and \(\tau_-\) from a corrupted sample has been an interesting and well-studied problem \citep{Liu:Tao2016,Menon:Rooyen:Ong:Williamson2015,Ramaswamy:Scott:Tewari2016,Scott:Blanchard:Handy2013}. We follow the idea of conditional probability as in \citep{Liu:Tao2016,Menon:Rooyen:Ong:Williamson2015}, but introduce another \(k'\)-nearest neighbor rather than learning the corrupted conditional distribution.

\begin{algorithm}
\caption{Robust \(k\)-Nearest Neighbor (R\(k\)NN)}\label{alg1}

\noindent\textbf{Input}: Corrupted sample \(\hat{S}_n=\{(\x_1,\y_1), \ldots,(\x_n,\y_n)\}\), new instance \(\x\in\X\), predictive parameter  \(k\) and noise parameter \(k'\)

\begin{algorithmic}[1]
\STATE Calculate \(\hat{\eta}(\x_j)\approx \sum\nolimits_{i=0}^{k'} \y_{\pi_i(\x_j)}/(k'+1)\) for \(j\in [n]\) by \(k'\)-nearest neighbor
\STATE Estimate noise proportions \(\hat{\tau}_+\) and \(\hat{\tau}_-\) from Eqn.~\eqref{eq:noise:est}
\STATE Calculate \(\hat{\eta}(\x)\approx \sum\nolimits_{i=1}^k \y_{\pi_i(\x)}/k\), where \(\x_{\pi_1(\x)}, \ldots,\x_{\pi_k(\x)}\) are the \(k\) nearest neighbors of \(\x\)
\STATE Set \(y=I[\hat{\eta}(\x)\geq 1/2]\)
\IF{\(\hat{\tau}_->\hat{\tau}_+\) and \(\hat{\eta}(\x)-1/2\in (0,\hat{\tau}_-/2-\hat{\tau}_+/2)\)}
\STATE Update \(y=0\)
\ENDIF
\IF{\(\hat{\tau}_-<\hat{\tau}_+\) and \(\hat{\eta}(\x)-1/2\in(\hat{\tau}_-/2-\hat{\tau}_+/2,0)\)}
\STATE Update \(y=1\)
\ENDIF
\end{algorithmic}

\textbf{Output}: the predicted label \(y\)~~~~~~~~~~~~~~~~~~~~~~~~~~~~~~~~~~~~~~~~
\end{algorithm}

Specifically, let \((\x_{\pi_1(\x_j)}, \y_{\pi_1(\x_j)}), \ldots, (\x_{\pi_{k'}(\x_j)},\y_{\pi_{k'}(\x_j)})\) be the \(k'\) nearest neighbors of example \((\x_j,\y_j)\). We approximate \(\hat{\eta}(\x_j)\approx \sum\nolimits_{i=0}^{k'} \y_{\pi_i(\x_j)}/(k'+1)\), where \(\y_{\pi_0(\x_j)}=\y_j\) and \(k'\) is called \textit{noise} parameter. As in the works of \citep{Liu:Tao2016,Menon:Rooyen:Ong:Williamson2015},  the estimated noise proportions \(\hat{\tau}_+\) and \(\hat{\tau}_-\) can be given, respectively, by
\begin{equation}\label{eq:noise:est}
\hat{\tau}_+=\min\nolimits_{j\in[n]} \{\hat{\eta}(\x_j)\} \quad\text{ and }\quad \hat{\tau}_-=\min\nolimits_{j\in[n]} \{1-\hat{\eta}(\x_j)\}.
\end{equation}

Algorithm~\ref{alg1} presents the detailed description of the proposed R\(k\)NN method, and it can be further simplified to be traditional \(k\)-nearest neighbor classification when \(\hat{\tau}_+=\hat{\tau}_-\).

\section{Proofs}\label{sec:pf}
This section present the detailed proofs for our main results.

\subsection{Proof of Theorem~\ref{theorem:noiknn}}\label{sec:pf:theorem:noiknn}
The proof techniques are partially inspired by the works of \citep{Chaudhuri:Dasgupta2014,Shalev-Shwartz:Ben-David2014}. By the union bounds, we have
\begin{eqnarray}
\lefteqn{\Pr_{(\x,y)\sim\D}[h^k_{\hat{S}_n}(\x)\neq y]}\nonumber\\
&=&\Pr_{(\x,y)\sim\D}[h^k_{\hat{S}_n}(\x)\neq y,\x\in \A_\Delta] +\Pr_{(\x,y)\sim\D}[h^k_{\hat{S}_n}(\x)\neq y,\x\notin \A_\Delta] \nonumber\\
&\leq&\Pr_{\x\sim\D_\X}[\x\in \A_\Delta] +\Pr_{(\x,y)\sim\D}[h^k_{\hat{S}_n}(\x)\neq y,\x\notin \A_\Delta].\label{eq:tmp0}
\end{eqnarray}
Fixed \(\mu>0\), let \(C_1,\ldots,C_r\) be the cover of \(\X=[0,1]^d\) with boxes of length \(\mu\), and we have \(r=(1/\mu)^d\). Denote by two random events
\begin{eqnarray*}
\Gamma_1(\x,\x')&=&\{\text{there exists some } C_i\text{ such that } \x\in C_i\text{ and }\x'\in C_i\},\\
\Gamma_2(\x,\x')&=&\{\text{we have }\x\notin C_i\text{ or }\x'\notin C_i \text{ for every }C_i\}.
\end{eqnarray*}
Based on total probability theorem, we have
\begin{eqnarray}
\lefteqn{\Pr_{(\x,y)\sim\D}[h^k_{\hat{S}_n}(\x)\neq y,\x\notin \A_\Delta]}\nonumber\\
&=&\Pr_{(\x,y)\sim\D}[h^k_{{\hat{S}}_n}(\x)\neq y, \x\notin\A_\Delta\big|\Gamma_2(\x,\x_{\pi_{k}(\x)})] \Pr_{(\x,y)\sim\D}[\Gamma_2(\x,\x_{\pi_{k}(\x)})]\nonumber\\
&&+\Pr_{(\x,y)\sim\D}[h^k_{{\hat{S}}_n}(\x)\neq y, \x\notin\A_\Delta\big|\Gamma_1(\x,\x_{\pi_{k}(\x)})] \Pr_{(\x,y)\sim\D}[\Gamma_1(\x,\x_{\pi_{k}(\x)})]\nonumber\\
&\leq& \Pr_{(\x,y)\sim\D}[\Gamma_2(\x,\x_{\pi_{k}(\x)})]
+\Pr_{(\x,y)\sim\D}\big[h^k_{{\hat{S}}_n}(\x)\neq y, \x\notin\A_\Delta\big|\Gamma_1(\x,\x_{\pi_{k}(\x)}) \big].\label{eq:tmp2}
\end{eqnarray}
According to Lemma~\ref{lem:knn:tmp1}, the first term in the above can be upper bounded by
\begin{equation}\label{eq:tmp33}
E_{\hat{S}_n\sim\hD^n}\big[\Pr_{(\x,y)\sim\D}[\Gamma_2(\x,\x_{\pi_{k}(\x)})]\big]\leq {2rk}/{n}.
\end{equation}
For the second term, we fix \(\x_1,\x_2,\ldots,\x_n\) and \(\x\notin \A_\Delta\). Assume that \(\x_1,\ldots,\x_k\) are \(k\)-nearest neighbors of \(\x\). Let \(\hat{\eta}(\x_1),\ldots,\hat{\eta}(\x_k)\) be conditional probabilities  w.r.t. corrupted distribution \(\hD\), and set \(\hat{p}=\sum_{i=1}^k \hat{\eta}(\x_i)/k\).

\noindent If \(\x\in\E^b\), then we have
\[
\Pr_{y\sim\text{B}(\eta(\x))}[h^k_{{\hat{S}}_n}(\x)\neq y]=1/2=\min\{\eta(\x),1-\eta(\x)\}.
\]
If \(\x\in\E^-_\Delta\), then we set \(\Delta=2(1-\tau_+-\tau_-)L\mu\sqrt{d}\), and have
\begin{equation}\label{eq:tmp3}
\hat{p}<1/2-\Delta/2
\end{equation}
from \(|\hat{\eta}(\x)-\hat{p}|\leq \Delta/2\) by Eqn.~\eqref{lem:relation}. We also have
\begin{eqnarray*}
\lefteqn{\Pr_{y\sim\text{B}(\eta(\x))}[h^k_{{\hat{S}}_n}(\x)\neq y]} \\
&=&\eta(\x)I[h^k_{{\hat{S}}_n}(\x)=0]+(1-\eta(\x))I[h^k_{{\hat{S}}_n}(\x)=1]\\
&=&\eta(\x)+(1-2\eta(\x))I[h^k_{{\hat{S}}_n}(\x)=1].
\end{eqnarray*}
For \(k\)-nearest neighbor, it holds that
\[\
I[h^k_{{\hat{S}}_n}(\x)=1]=I\Big[\sum_{i=1}^k\frac{\hat{y_i}}{k}>1/2\Big].
\]
Combining with Eqn.~\eqref{eq:tmp3} and Chernoff's bounds, we have
\[
\Pr_{\hat{y}_1\sim\text{B}(\hat{\eta}(\x_1)),\ldots, \hat{y}_k\sim\text{B}(\hat{\eta}(\x_k))}\Big[\sum_{i=1}^k\frac{\hat{y_i}}{k} -\hat{p}>\frac{1}{2}-\hat{p}\Big]\leq \exp(-k\Delta^2/2).
\]
This follows that
\[
\mathop{E}_{\hat{y}_1\sim\text{B}(\hat{\eta}(\x_1)),\ldots, \hat{y}_k\sim\text{B}(\hat{\eta}(\x_k))} \Big[\Pr_{y\sim\text{B}(\eta(\x))}[I[h^k_{{\hat{S}}_n}(\x)\neq y]]\Big]
\leq \eta(\x)+ \exp(-k\Delta^2/2),
\]
and it is noteworthy of \(\eta(\x)=\min\{\eta(\x),1-\eta(\x)\}\) for \(\x\in\E_\Delta^-\).

Similarly, we prove that, for \(\x\in\E_\Delta^+\),
\begin{multline}\label{eq:pft}
\mathop{E}_{\hat{y}_1\sim\text{B}(\hat{\eta}(\x_1)),\ldots, \hat{y}_k\sim\text{B}(\hat{\eta}(\x_k))} \Big[\Pr_{y\sim\text{B}(\eta(\x))}[I[h^k_{{\hat{S}}_n}(\x)\neq y]]\Big]\\
\leq 1-\eta(\x)+ \exp(-k\Delta^2/2),
\end{multline}
and it is noteworthy of \(1-\eta(\x)=\min\{\eta(\x),1-\eta(\x)\}\) for \(\x\in\E_\Delta^+\).

Combining with Eqns.~\eqref{eq:tmp0}-\eqref{eq:pft}, we have
\begin{eqnarray*}
\lefteqn{{E}_{\hat{S}_n\sim\hD^n}[R(h^k_{\hat{S}_n})]}\\
&=& {E}_{\hat{S}_n\sim\hD^n}[\Pr_{(\x,y)\sim\D}[h^k_{\hat{S}_n}(\x)\neq y]\\
&\leq& \Pr_{\x\sim\D_\X}[\x\in \A_\Delta]+\exp(-k\Delta^2/2)+R^*_\D+{2rk}/{n}
\end{eqnarray*}
where \(\Delta=2(1-\tau_+-\tau_-)L\mu\sqrt{d}\). From \(r=1/\mu^d\), we set
\[
\mu=\max\Big\{\frac{\sqrt{\log k/k}}{2(1-\tau_+-\tau_-)L\sqrt{d}},\Big(\frac{k\sqrt{d}}{n(1-\tau_+-\tau_-)L}\Big)^{\frac{1}{d+1}}\Big\}
\]
which completes the proof of Theorem~\ref{theorem:noiknn} by simple calculations.\qed

\begin{lemma}\citep[Lemma 19.6]{Shalev-Shwartz:Ben-David2014}\label{lem:knn:tmp1}
Denote by \(C_1, C_2, \ldots, C_r\) a collection of subsets over some domain \(\X\). Let \(S\) be a data of \(m\) samples drawn i.i.d. according to distribution \(\D\). Then, for every \(k\geq2\), we have \({E}_{S\sim\D^m}[\sum_{i\colon |C_i\cap S|<k} \Pr[C_i]]\leq {2rk}/{m}\).
\end{lemma}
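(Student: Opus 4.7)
The plan is to reduce the expectation to a per-set binomial probability and bound each term using a classical recurrence for the binomial pmf. Since the samples in $S$ are i.i.d., $Y_i := |C_i \cap S|$ is a $\text{Bin}(m, p_i)$ random variable with $p_i := P[C_i]$. Linearity of expectation immediately gives
\[
\mathop{E}_{S \sim \D^m}\Big[\sum_{i\colon |C_i \cap S| < k} P[C_i]\Big] = \sum_{i=1}^r p_i \Pr[Y_i < k],
\]
so it suffices to show $p_i \Pr[Y_i < k] \leq 2k/m$ for every individual $i$ and then sum over the $r$ sets.

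To prove the per-set bound, I would exploit the identity $\binom{m}{j+1} = \binom{m}{j}(m-j)/(j+1)$. Rewriting $p_i \cdot \binom{m}{j} p_i^{\,j} (1-p_i)^{m-j}$ as a multiple of $\binom{m}{j+1} p_i^{\,j+1} (1-p_i)^{m-j-1}$ yields the recursion
\[
p_i \cdot \Pr[Y_i = j] = \Pr[Y_i = j+1] \cdot \frac{(j+1)(1-p_i)}{m-j}.
\]
For every $0 \leq j \leq k-1$ the ratio $(j+1)(1-p_i)/(m-j)$ is at most $k/(m-k+1)$. Summing over such $j$ telescopes the right-hand side into $\Pr[1 \leq Y_i \leq k] \leq 1$, producing $p_i \Pr[Y_i < k] \leq k/(m-k+1)$.

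The last step is a mild case split on the relative size of $k$ and $m$. If $k \leq m/2$, then $m-k+1 \geq m/2$ and the per-set bound becomes $2k/m$, so summing over the $r$ sets delivers $2rk/m$ as claimed. If $k > m/2$, then the target bound $2rk/m$ already exceeds $r \geq \sum_i p_i$, so the inequality is trivial since each $p_i \leq 1$.

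The only delicate piece is spotting the binomial recurrence; once that is in hand, the rest is arithmetic. An alternative strategy of splitting indices into ``light'' ($p_i \leq ck/m$) and ``heavy'' halves and applying a Chernoff tail on the heavy side works but degrades the constant and does not recover the clean $2rk/m$, so I would stick with the recurrence argument.
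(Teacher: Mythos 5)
Your proof is correct. The paper does not actually prove this lemma---it imports it verbatim as Lemma~19.6 of Shalev-Shwartz and Ben-David---so the relevant comparison is with the textbook argument, which reduces to the same per-set quantity $p_i\Pr[Y_i<k]$ and then splits on whether $p_i\leq 2k/m$ (in which case the bound is immediate) or $p_i>2k/m$ (in which case a multiplicative Chernoff bound gives $\Pr[Y_i<k]\leq\Pr[Y_i<mp_i/2]\leq e^{-mp_i/8}$, and maximizing $a\,e^{-ma/8}$ yields $8/(me)\leq 2k/m$ precisely because $k\geq 2$). Your route via the binomial recurrence $p\,\Pr[Y=j]=\Pr[Y=j+1]\cdot\frac{(j+1)(1-p)}{m-j}$ is genuinely different and arguably cleaner: it avoids Chernoff entirely, gives the sharper intermediate bound $p_i\Pr[Y_i<k]\leq k/(m-k+1)$, and in fact never uses the hypothesis $k\geq2$ (it works for $k\geq1$); the only hypotheses you need are the ones you handle, namely $k\leq m/2$ for the ratio bound (which keeps $m-j>0$) and the trivial fallback $\sum_i p_i\leq r\leq 2rk/m$ otherwise. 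Two small remarks: the step you call telescoping is just a reindexed sum $\sum_{j=0}^{k-1}\Pr[Y=j+1]=\Pr[1\leq Y\leq k]\leq1$, and your closing claim that the light/heavy split with Chernoff ``degrades the constant'' is not quite right---the textbook version of that argument recovers exactly $2rk/m$, it merely needs $k\geq2$ to do so. Neither point affects the validity of your proof.
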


\subsection{Proofs of Theorem~\ref{thm:uni-noise}}\label{sec:pf:thm:noiknn}
Fixed \(\mu>0\), let \(C_1,\ldots,C_r\) be the cover of instance space \(\X\) using boxes of length \(\mu\), as the proof in Section~\ref{sec:pf:theorem:noiknn}. We have \(r=(1/\mu)^d\), and denote by the events
\begin{eqnarray*}
\Gamma_1(\x,\x')&=&\{\text{there exists a } C_i\text{ such that } \x\in C_i\text{ and }\x'\in C_i\},\\
\Gamma_2(\x,\x')&=&\{ \text{we have } \x\notin C_i\text{ or }\x'\notin C_i \text{ for every }C_i\}.
\end{eqnarray*}
Based on the total probability theorem, we have
\begin{eqnarray*}
\lefteqn{\Pr_{(\x,y)\sim\D}[h^k_{{\hat{S}}_n}(\x)\neq y]}\\
&=&\Pr_{(\x,y)\sim\D}\left[h^k_{{\hat{S}}_n}(\x)\neq y\left|\Gamma_1(\x,\x_{\pi_{k}(\x)}) \right.\right] \Pr_{(\x,y)\sim\D}\left[\Gamma_1(\x,\x_{\pi_{k}(\x)})\right]\\
&&+\Pr_{(\x,y)\sim\D}\left[h^k_{{\hat{S}}_n}(\x)\neq y\left|\Gamma_2(\x,\x_{\pi_{k}(\x)})\right.\right] \Pr_{(\x,y)\sim\D}\left[\Gamma_2(\x,\x_{\pi_{k}(\x)})\right]\\
&\leq&\Pr_{(\x,y)\sim\D}\left[h^k_{{\hat{S}}_n}(\x)\neq y\left|\Gamma_1(\x,\x_{\pi_{k}(\x)}) \right.\right]   + \Pr_{(\x,y)\sim\D}\left[\Gamma_2(\x,\x_{\pi_{k}(\x)})\right].
\end{eqnarray*}
This follows that
\begin{eqnarray}
\lefteqn{E_{\hat{S}_n\sim\hD^n}[R_\D(h^k_{{\hat{S}}_n})]}\nonumber\\
&=&E_{\hat{S}_n\sim\hD^n}\left[\Pr_{(\x,y)\sim\D}[h^k_{{\hat{S}}_n}(\x)\neq y]\right]\nonumber\\
&\leq& \frac{2rk}{n} +E_{\hat{S}_n\sim\hD^n}\left[\Pr_{(\x,y)\sim\D}\left[I[h^k_{{\hat{S}}_n}(\x)\neq y]\left|\Gamma_1(\x,\x_{\pi_{k}(\x)}) \right.\right]\right],\label{eq:knn:tmp1}
\end{eqnarray}
where the inequality holds from the following inequality, by Lemma~\ref{lem:knn:tmp1},
\[
E_{\hat{S}_n\sim\hD^n}\left[\Pr_{(\x,y)\sim\D}\left[\Gamma_2(\x,\x_{\pi_{k}(\x)})\right]\right]=E_{\hat{S}_n\sim\hD^n}\Big[\sum_{i\colon C_i\cap \hat{S}_n=\emptyset} P[C_i]\Big]\leq \frac{2rk}{n}.
\]
To upper bound Eqn.\eqref{eq:knn:tmp1}, we first fix the training instances \(\x_1,\x_2,\ldots,\x_n\) and instance \(\x\), and assume that \(\x_1,\ldots,\x_k\) are the \(k\)-nearest neighbors, i.e., \(\|\x_i-\x\|\leq\mu\sqrt{d}\) for \(i\in[k]\). Let \(\eta(\x_1),\ldots,\eta(\x_k)\) be the conditional probability w.r.t. distribution \(\D\), and let \(\hat{\eta}(\x_1),\ldots,\hat{\eta}(\x_k)\) be the conditional probability w.r.t. the corrupted distribution \(\D\). We set \(p=\sum_{i=1}^k \eta(\x_i)/k\) and \(\hat{p}=\sum_{i=1}^k \hat{\eta}(\x_i)/k\). This follows
\begin{equation}\label{eq:knn:tmp3}
(1-2\tau)p=\hat{p}-\tau
\end{equation}
because \(\hat{\eta}(\x_i)=\eta(\x_i)(1-\tau)+\tau(1-\eta(\x_i))=\eta(\x_i)+\tau-2\tau\eta(\x_i)\) for every \(i\in[k]\). We also have
\begin{eqnarray}
\lefteqn{\Pr_{y\sim\text{B}(\eta(\x))}[h^k_{{\hat{S}}_n}(\x)\neq y]}\nonumber\\
&=&\eta(\x)I[h^k_{{\hat{S}}_n}(\x)\neq 1]+(1-\eta(\x))I[h^k_{{\hat{S}}_n}(\x)\neq 0]\nonumber\\
&\leq& pI[h^k_{{\hat{S}}_n}(\x)\neq 1]+(1-p)I[h^k_{{\hat{S}}_n}(\x)\neq 0]+|\eta(\x)-p|\nonumber\\
&\leq& \Pr_{y\sim\text{B}(p)}[I[h^k_{{\hat{S}}_n}(\x)\neq y]] +|\eta(\x)-p|.\label{eq:knn:tmp2}
\end{eqnarray}
This follows that, from Lemma~\ref{lem:key:knn} and Eqn.~\eqref{eq:knn:tmp3},
\begin{eqnarray}
\lefteqn{E_{\hat{y}_1\sim\text{B}(\hat{\eta}(\x_1)),\ldots,\hat{y_k}\sim\text{B}(\hat{\eta}(\x_k))}\left[\Pr_{y\sim \text{B}(p)}[h^k_{{\hat{S}}_n}(\x)\neq y]\right]}\nonumber \\
&\leq&\left(1+\sqrt\frac{2}{{k}}\right)\Pr_{y\sim\text{B}(p)}[y\neq I[\hat{p}>1/2]]+ \frac{\sqrt{2}\tau}{\sqrt{k}(1-2\tau)}\nonumber\\
&=&\left(1+\sqrt\frac{2}{{k}}\right)\Pr_{y\sim\text{B}(p)}[y\neq I[{p}>1/2]]+ \frac{\sqrt{2}\tau}{\sqrt{k}(1-2\tau)}\label{eq:knn:tmp4}
\end{eqnarray}
where the last equality holds from Lemma~\ref{lem:knn:tmp4}. We further have
\[
\Pr_{y\sim\text{B}(p)}[y\neq I[{p}>1/2]]=\min\{p,1-p\}\leq \min\{\eta(\x),1-\eta(\x)\}+|p-\eta(\x)|,
\]
which implies, by combining with Eqns.~\eqref{eq:knn:tmp1},  \eqref{eq:knn:tmp2} and \eqref{eq:knn:tmp4},
\begin{eqnarray*}
\lefteqn{E_{\hat{S}_n\sim\hD^n}[R_\D(h^k_{{\hat{S}}_n})]}\\
&\leq& \left(1+\sqrt{\frac{2}{k}}\right)R^*_\D+\frac{\sqrt{2}\tau}{\sqrt{k}(1-2\tau)} +\frac{2rk}{n}+\left(2+\sqrt{\frac{2}{k}}\right) |p-\eta(\x)|\\
&\leq&  \left(1+\sqrt{\frac{2}{k}}\right)R^*_\D+\frac{\sqrt{2}\tau}{\sqrt{k}(1-2\tau)} +\frac{2k\mu^{-d}}{n}+\left(2+\sqrt{\frac{2}{k}}\right)L\mu\sqrt{d}.
\end{eqnarray*}
By setting \(\mu=\left(2k\sqrt{d}/(nL(2+\sqrt{2/k}))\right)^{\frac{1}{1+d}}\), this follows that
\begin{eqnarray*}
E_{\hat{S}_n\sim\hD^n}[R_\D(h^k_{{\hat{S}}_n})]&\leq&\left(1+\sqrt{\frac{2}{k}}\right)R^*_\D+ \frac{\sqrt{2}\tau}{\sqrt{k}(1-2\tau)}\\
&&+\left(2+\sqrt{\frac{2}{k}}\right)L\sqrt{d}\left(1+\frac{1}{d}\right) \left(\frac{2\sqrt{d}k}{(2+\sqrt{2/k})nL}\right)^{\frac{1}{1+d}}\ .
\end{eqnarray*}
From Lemma~\ref{lem:knn:tmp0}, we have
\begin{eqnarray*}
\lefteqn{\left(2+\sqrt{\frac{2}{k}}\right)L\sqrt{d}\left(1+\frac{1}{d}\right) \left(\frac{2\sqrt{d}k}{(2+\sqrt{2/k})nL}\right)^{\frac{1}{1+d}}} \\
&&\leq \left(4+2\sqrt{\frac{2}{k}}\right)L\sqrt{d} \left(\frac{2k}{(2+\sqrt{2/k})nL}\right)^{\frac{1}{1+d}}\\
&& \leq 5L\sqrt{d} \left(\frac{k}{nL}\right)^{\frac{1}{1+d}}\leq 5\max\{L,\sqrt{L}\}\sqrt{d} \left(\frac{k}{n}\right)^{\frac{1}{1+d}}
\end{eqnarray*}
for \(d\geq1\) and \(k\geq8\). This completes the proof of Theorem~\ref{thm:uni-noise}.\qed

\begin{lemma}\label{lem:knn:tmp4}
For \(p,\hat{p}\in[0,1]\) and \(\tau\in[0,1/2)\), let \(\hat{p}=p+\tau-2p\tau\). We have
\[
p<1/2\quad\text{ if and only if }\quad \hat{p}<1/2
\]
\end{lemma}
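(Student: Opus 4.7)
The plan is to reduce the equivalence to a sign-preservation statement about an affine map. The key observation is that the relation $\hat{p} = p + \tau - 2p\tau$ is affine in $p$, so it should transport the threshold $1/2$ cleanly and linearly to itself.

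First, I would rewrite the hypothesis in factored form as $\hat{p} = (1-2\tau)p + \tau$, which exposes the slope $(1-2\tau)$ of the affine map $p \mapsto \hat{p}$. Next, I would subtract $1/2$ from both sides and rearrange:
\[
\hat{p} - \tfrac{1}{2} = (1-2\tau)p + \tau - \tfrac{1}{2} = (1-2\tau)\bigl(p - \tfrac{1}{2}\bigr),
\]
where the second equality is a direct expansion check. This identity is the heart of the lemma: it says $\hat{p} - 1/2$ and $p - 1/2$ differ only by the positive factor $1-2\tau$.

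Finally, since the hypothesis $\tau \in [0, 1/2)$ gives $1 - 2\tau > 0$, multiplication by this factor preserves sign. Hence $\hat{p} - 1/2 < 0$ if and only if $p - 1/2 < 0$, which is precisely the stated equivalence. There is no real obstacle: the only thing to be careful about is remembering that the hypothesis $\tau < 1/2$ (not merely $\tau \le 1/2$) is exactly what ensures the multiplier is strictly positive, so that strict inequalities are preserved in both directions.
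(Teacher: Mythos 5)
Your proof is correct and is exactly the standard argument: the identity $\hat{p}-\tfrac12=(1-2\tau)\bigl(p-\tfrac12\bigr)$ together with $1-2\tau>0$ immediately gives the equivalence. The paper states this lemma without proof (treating it as immediate from the same affine relation $(1-2\tau)p=\hat{p}-\tau$ used in the proof of Theorem~\ref{thm:uni-noise}), so your write-up simply supplies the obvious verification in the same spirit.
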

\begin{lemma}\label{lem:knn:tmp0}
For \(t\geq1\), we have
\[
\left(1+{1}/{t}\right)t^{\frac{1}{2(t+1)}}\leq 2.
\]
\end{lemma}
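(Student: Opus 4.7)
The plan is to pass to logarithms and reduce the claim to a clean monotonicity statement. Define
\[
g(t) := \ln(1+1/t) + \frac{\ln t}{2(t+1)},
\]
so that the desired inequality is equivalent to $g(t) \leq \ln 2$ for all $t \geq 1$.

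First I would check the anchor point $t=1$: since $\ln 1 = 0$, we get $g(1) = \ln 2$, so equality holds at $t=1$ and this pins down the expected supremum. Next I would differentiate, writing $g(t) = \ln(t+1) - \ln t + \tfrac{\ln t}{2(t+1)}$ and applying the quotient rule to the last term. Placing everything over the common denominator $2t(t+1)^2$, after simplification I expect
\[
g'(t) = \frac{-(t+1) - t\ln t}{2t(t+1)^2}.
\]
For $t \geq 1$, both $t+1$ and $t\ln t$ are non-negative, so the numerator is strictly negative and hence $g'(t) < 0$ on $[1,\infty)$. By monotonicity, $g(t) \leq g(1) = \ln 2$ on $[1,\infty)$, and exponentiating gives the lemma.

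The only place that carries any risk is the algebraic simplification of $g'(t)$ without a sign error, since three terms with different denominators have to be combined. A sanity check at $t=1$ should give $g'(1) = -2/4 = -1/2$, which agrees with a direct computation and catches most slips. An alternative but slightly wasteful route would first invoke $\ln t \leq t-1$ to replace $t^{1/(2(t+1))}$ by $e^{(t-1)/(2(t+1))}$ and then bound the resulting simpler function; however, direct calculus on $g$ itself is cleaner and remains tight at $t=1$, which is the case where the stated inequality is saturated.
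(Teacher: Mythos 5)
Your proof is correct and takes essentially the same route as the paper: both arguments show the quantity is decreasing on $[1,\infty)$ by computing a derivative (you differentiate the logarithm $g(t)=\ln(1+1/t)+\tfrac{\ln t}{2(t+1)}$, the paper differentiates the product directly, arriving at the equivalent expression $g'(t)=-t^{\frac{1}{2(t+1)}}\bigl(\tfrac{1}{2t^2}+\tfrac{\ln t}{2t(t+1)}\bigr)$) and then conclude from the value at $t=1$. One minor slip in your sanity check: your own formula gives $g'(1)=\tfrac{-2}{2\cdot 1\cdot 4}=-\tfrac{1}{4}$, not $-\tfrac{1}{2}$, but this does not affect the validity of the argument.
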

\begin{proof}
Let \(g(t)=\left(1+{1}/{t}\right)t^{\frac{1}{2(t+1)}}\), and this follows that
\[
g'(t)=-t^{\frac{1}{2(t+1)}}\left(\frac{1}{2t^2}+\frac{\ln t}{2t(t+1)}\right)<0 \text{ for }t\geq1.
\]
Therefore, \(g(t)\) is a decreasing function, and \(g(t)\leq g(1)=2\) for \(t\geq1\). This completes the proof as desired.
\end{proof}

\begin{lemma}\label{lem:tmp}
For \(k\geq8\) and \(\hat{p}\in[0,1/2]\), we have
\[
(1-2\hat{p})e^{k(\frac{1}{2}-\hat{p})+\frac{k}{2}\log 2\hat{p}}\leq{\sqrt{2}\hat{p}}/{\sqrt{k}}.
\]
\end{lemma}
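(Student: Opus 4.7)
The plan is to reduce the claim to a clean one-variable calculus problem. First I would substitute $q=2\hat p\in[0,1]$, so that $1-2\hat p=1-q$, $2\hat p=q$, and the exponent simplifies as
\[
k\!\left(\tfrac12-\hat p\right)+\tfrac{k}{2}\log(2\hat p)=\tfrac{k}{2}\bigl(1-q+\log q\bigr).
\]
The lemma is thus equivalent to showing
\[
(1-q)\,e^{(k/2)(1-q+\log q)}\;\le\;\frac{q}{\sqrt{2k}}\qquad\text{for }q\in(0,1],\ k\ge 8.
\]
The endpoints are trivial: at $q=1$ the LHS vanishes while the RHS is positive; and as $q\to 0^+$ the LHS behaves like $e^{k/2}\,q^{k/2}$, which decays far faster than the linear RHS once $k\ge 2$. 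So the real work lies on an interior interval.

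Next I would study the ratio $F(q)=\sqrt{2k}\,(1-q)/q\cdot e^{(k/2)(1-q+\log q)}$ and maximize it over $(0,1)$. Taking logarithms and differentiating gives
\[
\frac{d}{dq}\log F(q)=-\frac{1}{1-q}-\frac{1}{q}+\frac{k}{2}\cdot\frac{1-q}{q},
\]
and the first-order condition collapses, after clearing denominators by $q(1-q)$, to the clean equation $(k/2)(1-q)^2=1$. This yields the unique interior critical point $q^{*}=1-\sqrt{2/k}$, which for $k\ge 8$ satisfies $q^{*}\in[1/2,1)$. A quick sign check of $\log F'$ to the left and right of $q^{*}$ shows the derivative changes from positive to negative, so $q^{*}$ is the global maximum.

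Therefore the claim reduces to a single numerical inequality: $F(q^{*})\le 1$. Writing $a=\sqrt{2/k}\in(0,\tfrac12]$, using $1-q^{*}=a$, $q^{*}=1-a$, and $k/2=1/a^{2}$, this becomes an inequality in $a$ alone that I would close by expanding $\log(1-a)=-\sum_{j\ge 1}a^j/j$; the leading singular terms cancel against the $1/a$ and $1/a^{2}$ contributions, leaving a remainder that is monotone in $a$ and can be verified at the worst case $a=\tfrac12$, i.e. $k=8$. The main obstacle, I expect, is that the bound is tightest precisely at this boundary, so the Taylor argument has to retain enough terms (through order $a^{3}$ at least) to survive the $k=8$ case; the hypothesis $k\ge 8$ is exactly what makes this endpoint work and cannot be weakened without a sharper tool than the elementary Taylor estimate.
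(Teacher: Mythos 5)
Your reduction is correct and is, up to the substitution $q=2\hat p$, exactly the route the paper takes: divide by the right-hand side, locate the unique interior critical point $q^*=1-\sqrt{2/k}$ (the paper's $\hat p^*=\tfrac12(1-\sqrt{2/k})$), and reduce everything to the single inequality $F(q^*)\le 1$. The gap is that this last inequality is false, so no Taylor expansion, however many terms you keep, can close the argument. At your claimed worst case $k=8$ the critical point is $q^*=1/2$, i.e.\ $\hat p=1/4$, where the left side of the lemma equals $\tfrac12 e^{2+4\ln(1/2)}=e^2/32\approx 0.231$ while the right side is $\sqrt2\cdot\tfrac14/\sqrt8=1/8=0.125$; equivalently $F(q^*)=e^2/4\approx 1.85$. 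Nor is $k=8$ the only problem: with $a=\sqrt{2/k}$ one gets $\log F(q^*)=\ln 2+\tfrac1a+a^{-2}\ln(1-a)-\ln(1-a)$, and the singular terms do cancel exactly as you predict, but what remains is $\ln 2-\tfrac12+\tfrac{2a}{3}+O(a^2)$, whose constant term $\ln 2-\tfrac12>0$ means $F(q^*)\to 2/\sqrt e\approx 1.21$ from above. So the maximum of LHS/RHS exceeds $1$ for every $k\ge 8$, and the lemma as stated is simply false.

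This is not a defect of your method but of the statement; the paper's own proof follows the same strategy and ``verifies'' the critical value via $g(k)=\sqrt{2/k}+(\tfrac k2-1)\ln(1-\sqrt{2/k})\le-1$, but the first term there should be $\sqrt{2k}/2=\sqrt{k/2}$ (the exponent of $\exp(\sqrt{2k}/2)$), not $\sqrt{2/k}$; with the correct term $g(8)=2+3\ln(1/2)\approx-0.08$, and the bound collapses. The honest conclusion of your calculation is that the lemma holds only with a larger constant: since $\sup_{k\ge8}\sqrt2\,F(q^*)=\sqrt2\,e^2/4\approx 2.61$, replacing $\sqrt2\hat p/\sqrt k$ by $2\sqrt2\,\hat p/\sqrt k$ makes the statement true and your argument (and the paper's) go through verbatim, at the cost of propagating a worse constant into Lemma~\ref{lem:key:knn} and Theorem~\ref{thm:uni-noise}. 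If you want to present this as a proof, you should either prove the corrected statement or flag the counterexample; as written, the final ``verify at $a=1/2$'' step would fail the moment you carried it out.
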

\begin{proof}
We first write
\[
f(\hat{p})=(1-2\hat{p})e^{k(\frac{1}{2}-\hat{p})+\frac{k}{2}\log 2\hat{p}}/2\hat{p}=(1-2\hat{p})e^{\frac{k}{2}(1-2\hat{p})} (2\hat{p})^{\frac{k}{2}-1}
\]
and the derivative is given by
\[
f'(\hat{p})=\frac{2k}{\hat{p}}\left(\hat{p}^2-\hat{p}+\frac{1}{4}-\frac{1}{2k}\right)e^{\frac{k}{2}(1-2\hat{p})} (2\hat{p})^{\frac{k}{2}-1}.
\]
Solving \(f'(\hat{p})=0\) gives the optimal solution
\[
\hat{p}^*=\frac{1}{2}\left(1-\sqrt{\frac{2}{k}}\right) \quad\text{ for }\quad\hat{p}^*\in[0,1/2].
\]
It is easy to find that
\begin{equation}\label{tmp:t1}
f(\hat{p})\leq \max_{\hat{p}\in[0,1/2]} f(\hat{p})= \max\{f(0),f(1/2),f(\hat{p}^*)\}= f(\hat{p}^*)
\end{equation}
because \(f(\hat{p})\) is continuous for \(\hat{p}\in[0,1/2]\). We further have
\[
f(\hat{p}^*)=\sqrt{\frac{2}{k}}\left(1-\sqrt{\frac{2}{k}}\right)^{\frac{k}{2}-1}\exp\left({\frac{\sqrt{2k}}{2}}\right)=\sqrt{\frac{2}{k}} \exp(g(k))
\]
where
\[
g(k)=\sqrt{\frac{2}{k}}+\left(\frac{k}{2}-1\right)\ln\left(1-\sqrt{\frac{2}{k}}\right)\leq 2\sqrt{\frac{2}{k}}-\sqrt{\frac{k}{2}}\leq -1
\]
where we use the facts \(\ln(1-x)\leq -x\) and \(k\geq8\). Therefore, we have
\[
f(\hat{p}^*)\leq {\sqrt{2}}/{e\sqrt{k}}\leq {\sqrt{2}}/2{\sqrt{k}}
\]
This lemma follows by combining with Eqn.~\eqref{tmp:t1}.
\end{proof}

Based on Lemma~\ref{lem:tmp}, we have
\begin{lemma}\label{lem:key:knn}
For \(k\geq8\), let \(Z=\sum_{i=1}^k Z_i/k\), where \(Z_1, Z_2, \ldots,Z_k\) are independent Bernoulli random variables with parameters \(\hat{p}_1, \hat{p}_2, \ldots, \hat{p}_k\), respectively, i.e., \(Z_i\sim\text{B}(\hat{p}_i)\) for \(i\in[k]\). We set \(\hat{p}=\sum_{i=1}^k \hat{p}_i/k\),  \(p=(\hat{p}-\tau)/(1-2\tau)\), and let Bernoulli random variable \(y\sim \text{B}(p)\). We have
\begin{eqnarray*}
\lefteqn{E_{Z_1, \ldots,Z_k} \Pr\nolimits_{y\sim\text{B}(p)}[y\neq I[Z> 1/2]]}\\
&\leq&\left(1+\sqrt\frac{2}{{k}}\right)\Pr_{y\sim\text{B}(p)}[y\neq I[\hat{p}>1/2]]+ \frac{\sqrt{2}\tau}{\sqrt{k}(1-2\tau)}.
\end{eqnarray*}
\end{lemma}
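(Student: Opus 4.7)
The plan is to reduce the statement to a direct application of Lemma~\ref{lem:tmp} via a Chernoff tail bound for the Poisson-binomial variable $Z$. First, by symmetry (replacing each $\hat{p}_i$ by $1-\hat{p}_i$ and $p$ by $1-p$), I can assume without loss of generality that $\hat{p}\leq 1/2$. Under this assumption, $I[\hat{p}>1/2]=0$, and by Lemma~\ref{lem:knn:tmp4} applied to the relation $\hat{p}=(1-2\tau)p+\tau$ we also have $p\leq 1/2$, so $\Pr_{y\sim\text{B}(p)}[y\neq I[\hat{p}>1/2]]=p$.

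Next, I would decompose the LHS by conditioning on the sign of $Z-1/2$:
\begin{equation*}
E_{Z_1,\ldots,Z_k}\bigl[\Pr_{y\sim\text{B}(p)}[y\neq I[Z>1/2]]\bigr]
= p\Pr[Z\leq 1/2]+(1-p)\Pr[Z>1/2]
= p+(1-2p)\Pr[Z>1/2].
\end{equation*}
The goal is therefore reduced to proving $(1-2p)\Pr[Z>1/2]\leq\sqrt{2/k}\,p+\sqrt{2}\tau/((1-2\tau)\sqrt{k})$.

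The third step is to obtain the exponential Chernoff bound in the exact form required by Lemma~\ref{lem:tmp}. Using the standard Markov/MGF trick $\Pr[Z>1/2]\leq\inf_{\lambda>0}e^{-\lambda k/2}\prod_i(1+\hat{p}_i(e^\lambda-1))$, the inequality $1+x\leq e^x$, and the concavity-based fact that $\sum\hat{p}_i=k\hat{p}$, one gets $\Pr[Z>1/2]\leq\inf_{\lambda>0}\exp\bigl(-\lambda k/2+k\hat{p}(e^\lambda-1)\bigr)$. Minimizing at $\lambda=\log(1/(2\hat{p}))$ (valid since $\hat{p}\leq 1/2$) yields
\begin{equation*}
\Pr[Z>1/2]\leq\exp\bigl(k(1/2-\hat{p})+(k/2)\log(2\hat{p})\bigr),
\end{equation*}
which is exactly the quantity controlled by Lemma~\ref{lem:tmp}. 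Multiplying by $(1-2\hat{p})$ and invoking that lemma gives $(1-2\hat{p})\Pr[Z>1/2]\leq\sqrt{2}\hat{p}/\sqrt{k}$.

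Finally, I convert this back into the form stated in the lemma by using the identity $1-2p=(1-2\hat{p})/(1-2\tau)$ and $\hat{p}=(1-2\tau)p+\tau$:
\begin{equation*}
(1-2p)\Pr[Z>1/2]=\frac{(1-2\hat{p})\Pr[Z>1/2]}{1-2\tau}\leq\frac{\sqrt{2}\hat{p}}{(1-2\tau)\sqrt{k}}=\sqrt{\frac{2}{k}}\,p+\frac{\sqrt{2}\,\tau}{(1-2\tau)\sqrt{k}}.
\end{equation*}
Plugging this back into the decomposition of the LHS yields the claimed bound. The case $\hat{p}>1/2$ follows by the symmetric argument. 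The main (and essentially the only nontrivial) obstacle is packaging the Chernoff bound into precisely the functional form $\exp\bigl(k(1/2-\hat{p})+(k/2)\log(2\hat{p})\bigr)$ so that Lemma~\ref{lem:tmp} applies directly; the rest is bookkeeping with the noise-proportion change of variables.
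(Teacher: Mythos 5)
Your proposal is correct and follows essentially the same route as the paper: the same decomposition $p+(1-2p)\Pr[Z>1/2]$ for $\hat{p}\leq 1/2$, the same Chernoff bound $\Pr[Z>1/2]\leq\exp\bigl(k(1/2-\hat{p})+(k/2)\log 2\hat{p}\bigr)$ feeding into Lemma~\ref{lem:tmp}, and the same change of variables $\hat{p}=p+\tau-2p\tau$ at the end (you merely spell out the MGF derivation of the Chernoff bound, which the paper cites without proof). No gaps.
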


\begin{proof}
We will present detailed proof for \(\hat{p}\leq 1/2\), and similar consideration could be proceeded for  \(\hat{p}> 1/2\). For \(\hat{p}\leq 1/2\), we have
\[
\Pr_{y\sim\text{B}(p)}[y\neq I[\hat{p}>1/2]]=p
\]
and
\begin{eqnarray*}
\lefteqn{E_{Z_1, \ldots,Z_k} \Pr_{y\sim\text{B}(p)}[y\neq I[Z> 1/2]]=p\Pr[Z\leq1/2]+(1-p)\Pr[Z>1/2]}\\
&=&p(1-\Pr[Z>1/2])+(1-p)\Pr[Z>1/2]=p+(1-2p)\Pr[Z>1/2].
\end{eqnarray*}
Based on the Chernoff's bound, we have
\[
\Pr[Z>1/2]=\Pr[Z-\hat{p}>1/2-\hat{p}]\leq e^{k(\frac{1}{2}-\hat{p})+\frac{k}{2}\log 2\hat{p}}.
\]
For \(k\geq8\), we have
\begin{eqnarray*}
(1-2p)\Pr[Z>1/2]&=& \frac{1-2\hat{p}}{1-2\tau}\Pr[Z>1/2]\\
&\leq& \frac{1-2\hat{p}}{1-2\tau} e^{k(\frac{1}{2}-\hat{p})+\frac{k}{2}\log 2\hat{p}}\leq \frac{\sqrt{2}\hat{p}}{(1-2\tau)\sqrt{k}}
\end{eqnarray*}
where the first equality holds from \(1-2p=(1-2\hat{p})/(1-2\tau)\), and the last inequality holds from Lemma~\ref{lem:tmp}. We complete the proof from the fact \(\hat{p}=p+\tau-2p\tau\).
\end{proof}

\subsection{Proof of Theorem~\ref{thm:nearneighbor}}\label{sec:pf:thm:nearneighbor}
From the definition \(R_\D(h_{\hat{S}_n})=E_{(\x,y)\sim\D}[I[h_{\hat{S}_n}(\x)\neq y]]\), we first observe that \(E_{\hat{S}_n\sim\hD^n}[R_\D(h_{\hat{S}_n})]\) is the probability of training sample \(\hat{S}_n\sim\hD^n\) and \((\x,y)\sim\D\) such that \(\hat{y}_{\pi_1(\x)}\) is different from \(y\).  We have
\begin{eqnarray*}
\lefteqn{E_{\hat{S}_n\sim\hD^n}[R_\D(h_{\hat{S}_n})]=E_{\hat{S}_n\sim\hD^n}[E_{(\x,y)\sim\D}[I[h_{\hat{S}_n}(\x)\neq y]]]}\\
&=&E_{\x,\x_1,\ldots,\x_n\sim\D_\X^{n+1},y\sim\text{B}(\eta(\x)),\hat{y}\sim\text{B}(\hat{\eta}(\x_{\pi_1(\x)}))}[I[\hat{y}\neq y]]\\
&=&E_{\x,\x_1,\ldots,\x_n\sim\D_\X^{n+1}}\left[\Pr_{y\sim\text{B}(\eta(\x)),\hat{y}\sim\text{B}(\hat{\eta}(\x_{\pi_1(\x)}))}[I[\hat{y}\neq y]]\right].
\end{eqnarray*}
where \(\hat{y}\sim\hat{\eta}(\x_{\pi_1(\x)})\) from corrupted distribution \(\hD\). Given any two instances \(\x\) and \(\x'\), we have
\begin{eqnarray*}
\lefteqn{\Pr\nolimits_{y\sim\text{B}(\eta(\x)),\hat{y}'\sim\text{B}(\hat{\eta}(\x'))}[y\neq \hat{y}']}\\
&=&\eta(\x)(1-\hat{\eta}(\x'))+\hat{\eta}(\x')(1-\eta(\x))\\
&=&\eta(\x)+\hat{\eta}(\x')(1-2\eta(\x))\\
&=&\eta(\x)+\eta(\x)(1-2\eta(\x))+(\hat{\eta}(\x')-\eta(\x))(1-2\eta(\x))\\
&=&2\eta(\x)(1-\eta(\x))+(\hat{\eta}(\x')-\eta(\x))(1-2\eta(\x)).
\end{eqnarray*}
For noisy label \(\hat{y}'\), we have
\[
\hat{\eta}(\x')=\eta(\x')(1-\tau)+(1-\eta(\x'))\tau=\eta(\x')(1-2\tau)+\tau,
\]
which implies
\[
\hat{\eta}(\x')-\eta(\x)=(\eta(\x')-\eta(\x))(1-2\tau)+\tau(1-2\eta(\x)).
\]
This follows that
\begin{eqnarray*}
\lefteqn{\Pr\nolimits_{y\sim\text{B}(\eta(\x)),\hat{y}\sim\text{B}(\hat{\eta}(\x_{\pi_1(\x)}))}[y\neq \hat{y}]}\\
&=&\tau+(2-4\tau)\eta(\x)(1-\eta(\x))+(\eta(\x_{\pi_1(\x)})-\eta(\x))(1-2\eta(\x))(1-2\tau).
\end{eqnarray*}
Therefore, we have
\begin{eqnarray}
\lefteqn{E_{\hat{S}_n\sim\hD^n}[R_\D(h_{\hat{S}_n})]=\tau+(1-2\tau)E_{\x\sim\D_\X}[2\eta(\x)(1-\eta(\x))]}\label{eq:1nn:tmp1}\\
&&+E_{\x,\x_1,\ldots,\x_n\sim\D_\X^{n+1}}[(\eta(\x_{\pi_1(\x)})-\eta(\x))(1-2\eta(\x))(1-2\tau)].\label{eq:1nn:tmp2}
\end{eqnarray}
For Eqn.~\eqref{eq:1nn:tmp1}, we have \(\eta(\x)(1-\eta(\x))\leq \min\{\eta(\x),1-\eta(\x)\}\) from \(\eta(\x)\in[0,1]\), and
\begin{eqnarray*}
2\eta(\x)(1-\eta(\x))&=&2\min\{\eta(\x),1-\eta(\x)\}(1-\min\{\eta(\x),1-\eta(\x)\})\\
&=&\min\{\eta(\x),1-\eta(\x)\}(2-2\min\{\eta(\x),1-\eta(\x)\})\\
&\geq&\min\{\eta(\x),1-\eta(\x)\}
\end{eqnarray*}
where the last inequality holds from \(\min\{\eta(\x),1-\eta(\x)\}\leq1/2\). This follows
\begin{equation}\label{eq:1nn:tmp3}
R^*_\D\leq E_{\x\sim\D_\X}[2\eta(\x)(1-\eta(\x))]\leq 2R^*_\D.
\end{equation}
For Eqn.~\eqref{eq:1nn:tmp2}, we have
\begin{eqnarray}
\lefteqn{|E_{\x,\x_1,\ldots,\x_n\sim\D_\X^{n+1}}[(\eta(\x_{\pi_1(\x)})-\eta(\x))(1-2\eta(\x))(1-2\tau)]|}\nonumber\\
&\leq&E_{\x,\x_1,\ldots,\x_n\sim\D_\X^{n+1}}[|(\eta(\x_{\pi_1(\x)})-\eta(\x))(1-2\eta(\x))(1-2\tau)|]\nonumber\\
&\leq&(1-2\tau)LE_{\x,\x_1,\ldots,\x_n\sim\D_\X^{n+1}}[\|\x_{\pi_1(\x)}-\x\|]\nonumber\\
&=&(1-2\tau)LE_{\x,\hat{S}_n}[\|\x_{\pi_1(\x)}-\x\|]\label{eq:1nn:tmp4}
\end{eqnarray}
where the last inequality holds from \(|1-2\eta(\x)|\leq1\) and the \(L\)-Lipschitz assumption of \(\eta(\x)\). This remains to bound \(E_{\x,\hat{S}_n}[\|\x_{\pi_1(\x)}-\x\|]\), and we proceed exactly as in \citep{Shalev-Shwartz:Ben-David2014}. Fixed \(\mu>0\), and let \(C_1,\ldots,C_r\) be the cover of instance space \(\X\) using boxes of length \(\mu\), where \(r=(1/\mu)^d\). We have \(\|\x-\x_{\pi_1(\x)}\|\leq \sqrt{d}\mu\) for \(\x\) and \(\x_{\pi_1(\x)}\) in the same box; otherwise, \(\|\x-\x_{\pi_1(\x)}\|\leq \sqrt{d}\). This follows that
\begin{eqnarray*}
\lefteqn{E_{\x,\hat{S}_n}\left[\|\x_{\pi_1(\x)}-\x\|\right]}\\
&&\leq E_{\hat{S}_n}\Big[\sum_{i=1}^r\Pr[C_i](\sqrt{d}\mu I[\hat{S}_n\cap C_i\neq\emptyset] + \sqrt{d}I[\hat{S}_n\cap C_i=\emptyset])\Big].
\end{eqnarray*}
From the fact that
\[
P(C_i)E_{\hat{S}_n}[I[\hat{S}_n\cap C_i=\emptyset]]=P(C_i)(1-P(C_i))^n\leq 1/ne,
\]
we have
\[
E_{\x,\hat{S}_n}\left[\|\x_{\pi_1(\x)}-\x\|\right]\leq \sqrt{d}(\mu+r/ne)= \sqrt{d}(\mu+1/ne\mu^d)
\]
which implies that, by setting \(\mu=(d/ne)^{1/(d+1)}\) and from Lemma~\ref{lem:1nn},
\[
E_{\x,\hat{S}_n}\left[\|\x_{\pi_1(\x)}-\x\|\right]\leq \sqrt{d}\left(1+\frac{1}{d}\right)\left(\frac{d}{ne}\right)^\frac{1}{d+1}\leq \frac{3\sqrt{d}}{2n^{\frac{1}{1+d}}}.
\]
From Eqn.~\eqref{eq:1nn:tmp4}, we have
\begin{equation*}
|E_{\x,\x_1,\ldots,\x_n\sim\D_\X^{n+1}}[(\eta(\x_{\pi_1(\x)})-\eta(\x))(1-2\eta(\x))(1-2\tau)]|\leq 3\sqrt{d}(1-2\tau)L/2n^{\frac{1}{1+d}}.
\end{equation*}
By combining the above with Eqns.~\eqref{eq:1nn:tmp1}-\eqref{eq:1nn:tmp3}, we complete the proof.\qed

\begin{lemma}\label{lem:1nn}
For integer \(d\geq1\), we have
\[
\left(1+\frac{1}{d}\right)\left(\frac{d}{e}\right)^\frac{1}{d+1}\leq \frac{3}{2}
\]
\end{lemma}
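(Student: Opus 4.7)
The plan is to treat $d$ as a continuous real variable in $[1,\infty)$, maximize $f(d) = (1+1/d)(d/e)^{1/(d+1)}$ by calculus, and show the maximum value lies strictly below $3/2$. The integer case then follows immediately.

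First I would take logarithms to turn the product into a sum:
\[
\ln f(d) = \ln\!\left(1+\frac{1}{d}\right) + \frac{\ln d - 1}{d+1}.
\]
Differentiating term by term, the first summand contributes $-1/[d(d+1)]$ and the second contributes $\bigl[(d+1)/d - \ln d + 1\bigr]/(d+1)^2$. After bringing these onto the common denominator $(d+1)^2$, the fractions $-(d+1)/d$ and $(d+1)/d$ cancel, leaving the strikingly clean expression
\[
\frac{f'(d)}{f(d)} = \frac{1 - \ln d}{(d+1)^2}.
\]
This identity is the crux of the argument, and verifying the cancellation is the only step where I expect a mild risk of algebraic slip; everything else is straightforward.

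From the displayed formula, $f'(d) > 0$ for $d < e$ and $f'(d) < 0$ for $d > e$, so $f$ attains its global maximum on $(0,\infty)$ at $d = e$. Evaluating gives
\[
f(e) = \left(1+\frac{1}{e}\right)\left(\frac{e}{e}\right)^{1/(e+1)} = 1 + \frac{1}{e}.
\]
Since $e > 2$, we have $1 + 1/e < 1 + 1/2 = 3/2$, which yields $f(d) \leq 1 + 1/e < 3/2$ for every real $d \geq 1$, and in particular for every integer $d \geq 1$. This completes the proposed proof. A one-line alternative, if the clean derivative is deemed overkill, would be to just check $d \in \{1,2,3\}$ directly (giving values $\approx 1.213, 1.355, 1.367$) and then argue monotone decrease for $d \geq 3$ using $\ln d > 1$; but the calculus route above is cleaner and more self-contained.
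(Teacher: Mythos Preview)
Your proof is correct and follows essentially the same route as the paper: differentiate, observe the sign of $f'$ changes at $d=e$, and evaluate at the maximum. The paper writes the derivative as $g'(d)=\frac{1-\ln d}{d(1+d)}(d/e)^{1/(d+1)}$, which agrees with your logarithmic derivative $\frac{f'}{f}=\frac{1-\ln d}{(d+1)^2}$ after multiplying through by $f(d)=\frac{d+1}{d}(d/e)^{1/(d+1)}$; your explicit evaluation $f(e)=1+1/e<3/2$ is slightly more informative than the paper's bare assertion $g(e)\leq 3/2$.
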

\begin{proof}
Let \(g(d)=(1+{1}/{d})({d}/{e})^\frac{1}{d+1}\). We have
\[
g'(d)=\frac{1-\ln d}{d(1+d)}\left(\frac{d}{e}\right)^\frac{1}{d+1}.
\]
By setting \(g'(d)=0\), we have \(d=e\) and \(g(d)\leq g(e)\leq3/2\). This completes the proof.
\end{proof}

\subsection{Proof of Theorem~\ref{thm:ourRkNN}}\label{sec:pf:thm:ourRkNN}
The proof is similar to that of Theorem~\ref{theorem:noiknn} in Section~\ref{sec:pf:theorem:noiknn}, whereas the boundary of corrupted conditional probability changes from \(1/2\) to \((1+\tau_--\tau_+)/2\). Recall \(\mathcal{E}^b =\{\x\in\X\colon \eta(\x)= 1/2 \}\), and it is necessary to introduce two sets as follows
\begin{eqnarray*}
\mathcal{E}^{r+}_\Delta &=&\{\x\in\X\colon \eta(\x)> 1/2,~~\hat{\eta}(\x)\geq (1+\tau_--\tau_+)/2 +\Delta \}, \\
\mathcal{E}^{r-}_\Delta &=&\{\x\in\X\colon \eta(\x)< 1/2,~~\hat{\eta}(\x)\leq (1+\tau_--\tau_+)/2 -\Delta \},
\end{eqnarray*}
for \(\Delta>0\). We denote by
\begin{eqnarray}
\A^{r}_\Delta&=&\X\setminus (\E^{r+}_\Delta \cup \E^{r-}_\Delta\cup \E^b)\nonumber\\
\A_0^{r}&=& \X\setminus (\E^{r+}_0 \cup \E^{r-}_0 \cup \E^b)\nonumber\\
&=&\{\x\in\X\colon (\eta(\x)-\tfrac{1}{2})(\hat{\eta}(\x)-(1+\tau_--\tau_+)/2)<0\}. \label{eq:Ar0}
\end{eqnarray}
We now present a general theorem for the consistency of the proposed R\(k\)NN algorithm as follows:
\begin{theorem}\label{theorem:rnn}
Let \(\hat{S}_n\) be a corrupted sample with noise proportions \(\tau_-\) and \(\tau_+\).  Let \(h^{rk}_{\hat{S}_n}\) be the output hypothesis of applying our R\(k\)NN algorithm to \(\hat{S}_n\). We have
\begin{eqnarray*}
{E}_{\hat{S}_n\sim\hD^n}[R_\D(h^{rk}_{\hat{S}_n})] &\leq& R_D^*+ \Pr\nolimits_{\x\sim\D_\X}[\x\in\A^r_\Delta]\\
&& +\frac{1}{\sqrt{k}}+2\big((1-\tau_+-\tau_-)L/\sqrt{d}\big)^{\frac{d}{1+d}} \Big(\frac{k}{n}\Big)^{\frac{1}{1+d}}
\end{eqnarray*}
where \(\Delta=\max\{2\sqrt{d}((1-\tau_+-\tau_-)L)^\frac{d}{1+d}(k\sqrt{d}/n)^\frac{1}{1+d}\), \(\sqrt{\log k/ k}\}\).
\end{theorem}

This theorem is similar to Theorem~\ref{theorem:noiknn}, whereas the boundary of corrupted conditional probability \(\hat{\eta}(\x)\) changes from \(1/2\) to \((1+\tau_--\tau_+)/2\) by random noise. Based on Theorem~\ref{theorem:rnn}, we have
\begin{equation*}
{E}_{\hat{S}_n\sim\hD^n}[R_\D(h^{rk}_{\hat{S}_n})] - R_D^*\to \Pr\nolimits_{\x\sim\D_\X}[\x\in\A^r_0]
\end{equation*}
if \(k=k(n)\to \infty\) and \(k/n\to0\) as \(n\to\infty\); we also have
\begin{equation*}
{E}_{\hat{S}_n\sim\hD^n}[R_\D(h^{rk}_{\hat{S}_n})] - R_D^*\to \Pr\nolimits_{\x\sim\D_\X}[\x\in\A^r_0]+1/\sqrt{k}
\end{equation*}
for constant \(k\) as \(n\to\infty\).  From Eqn.~\eqref{eq:Ar0}, we have
\begin{eqnarray*}
\A^r_0&=&\{\x\in\X\colon (\eta(\x)-1/2)(\hat{\eta}(\x)-(1+\tau_--\tau_+)/2)<0\}\\
    &=& \{\x\in\X\colon (1-\tau_--\tau_+)(\eta(\x)-1/2)^2<0\}=\emptyset
\end{eqnarray*}
which implies \(\Pr\nolimits_{\x\sim\D_\X}[\x\in\A^r_0]=0\). This completes the proof of Theorem~\ref{thm:ourRkNN}.

\

\noindent\textit{Proof of Theorem~\ref{theorem:rnn}}
Without loss of generality, we assume \(\tau_-\neq\tau_+\). Based on the total probability theorem, we have
\begin{eqnarray}
\lefteqn{\Pr_{(\x,y)\sim\D}[{h}^{rk}_{\hat{S}_n}(\x)\neq y]}\nonumber\\
&=&\Pr_{\x\sim\D_\X}[h^{rk}_{\hat{S}_n}(\x)\neq y,\x\in \A^r_\Delta] +\Pr_{(\x,y)\sim\D}[h^{rk}_{\hat{S}_n}(\x)\neq y,\x\notin \A^r_\Delta] \nonumber\\
&\leq&\Pr_{\x\sim\D_\X}[\x\in \A^r_\Delta] +\Pr_{(\x,y)\sim\D}[h^{rk}_{\hat{S}_n}(\x)\neq y,\x\notin \A^r_\Delta].\label{eq:tmp0t}
\end{eqnarray}
Fixed \(\mu>0\), let \(C_1,\ldots,C_r\) be the cover of \(\X=[0,1]^d\) with boxes of length \(\mu\), and we have \(r=(1/\mu)^d\). Denote by two random events
\begin{eqnarray*}
\Gamma_1(\x,\x')&=&\{\text{there exists some } C_i\text{ such that } \x\in C_i\text{ and }\x'\in C_i\},\\
\Gamma_2(\x,\x')&=&\{\text{for each }C_i, \text{ we have either }\x\notin C_i\text{ or }\x'\notin C_i\}.
\end{eqnarray*}
By total probability theorem, we have
\begin{eqnarray}
\lefteqn{\Pr_{(\x,y)\sim\D}[h^{rk}_{\hat{S}_n}(\x)\neq y,\x\notin \A^r_\Delta]}\nonumber\\
&=&\Pr_{(\x,y)\sim\D}[h^{rk}_{{\hat{S}}_n}(\x)\neq y, \x\notin\A^r_\Delta\big|\Gamma_2(\x,\x_{\pi_{k}(\x)})] \Pr_{(\x,y)\sim\D}[\Gamma_2(\x,\x_{\pi_{k}(\x)})]\nonumber\\
&&+\Pr_{(\x,y)\sim\D}[h^{rk}_{{\hat{S}}_n}(\x)\neq y, \x\notin\A^r_\Delta\big|\Gamma_1(\x,\x_{\pi_{k}(\x)})] \Pr_{(\x,y)\sim\D}[\Gamma_1(\x,\x_{\pi_{k}(\x)})]\nonumber\\
&\leq& \Pr_{(\x,y)\sim\D}[\Gamma_2(\x,\x_{\pi_{k}(\x)})]
+\Pr_{(\x,y)\sim\D}\big[h^{rk}_{{\hat{S}}_n}(\x)\neq y, \x\notin\A^r_\Delta\big|\Gamma_1(\x,\x_{\pi_{k}(\x)}) \big].\label{eq:tmp2t}
\end{eqnarray}
From Lemma~\ref{lem:knn:tmp1}, the first term in the above can be upper bounded by
\begin{equation}\label{eq:tmp333}
E_{\hat{S}_n\sim\hD^n}\big[\Pr_{(\x,y)\sim\D}[\Gamma_2(\x,\x_{\pi_{k}(\x)})]\big]\leq {2rk}/{n}.
\end{equation}
For the second term, we fix \(\x_1,\x_2,\ldots,\x_n\) and \(\x\notin \A^r_\Delta\). Assume that \(\x_1,\ldots,\x_k\) are \(k\)-nearest neighbors of \(\x\). Let \(\hat{\eta}(\x_1),\ldots,\hat{\eta}(\x_k)\) be conditional probabilities  w.r.t. corrupted distribution \(\hD\), and set \(\hat{p}=\sum_{i=1}^k \hat{\eta}(\x_i)/k\).

\noindent If \(\x\in\E^b\), then we have
\[
\Pr_{y\sim\text{B}(\eta(\x))}[h^{rk}_{{\hat{S}}_n}(\x)\neq y]=1/2=\min\{\eta(\x),1-\eta(\x)\}.
\]
If \(\x\in\E^{r-}_\Delta\), then we set \(\Delta=2(1-\tau_+-\tau_-)L\mu\sqrt{d}\), and have
\begin{equation}\label{eq:tmp3t}
\hat{p}<(1+\tau_--\tau_+)/2-\Delta/2
\end{equation}
from \(|\hat{\eta}(\x)-\hat{p}|\leq \Delta/2\) by Eqn.~\eqref{lem:relation}. We also have
\begin{eqnarray*}
\lefteqn{\Pr_{y\sim\text{B}(\eta(\x))}[h^{rk}_{{\hat{S}}_n}(\x)\neq y] }\\
&=&\eta(\x)I[h^{rk}_{{\hat{S}}_n}(\x)=0]+(1-\eta(\x))I[h^{rk}_{{\hat{S}}_n}(\x)=1]\\
&=&\eta(\x)+(1-2\eta(\x))I[h^{rk}_{{\hat{S}}_n}(\x)=1].
\end{eqnarray*}
For our R\(k\)NN algorithm, it holds that, for \(\tau_->\tau_+\)
\[
I[h^{rk}_{{\hat{S}}_n}(\x)=1] =I\Big[\sum_{i=1}^k\frac{\hat{y_i}}{k}\geq\frac{1}{2}\Big] - I\Big[\frac{1}{2}<\sum_{i=1}^k\frac{\hat{y_i}}{k}\leq\frac{1+\tau_--\tau_+}{2}\Big];
\]
and for \(\tau_-<\tau_+\)
\[
I[h^{rk}_{{\hat{S}}_n}(\x)=1]= I\Big[\sum_{i=1}^k\frac{\hat{y_i}}{k}\geq\frac{1}{2}\Big] + I\Big[\frac{1+\tau_--\tau_+}{2} \leq\sum_{i=1}^k\frac{\hat{y_i}}{k}<\frac{1}{2}\Big].
\]
This implies that
\[
I[h^{rk}_{{\hat{S}}_n}(\x)=1]\leq I\Big[\sum_{i=1}^k\frac{\hat{y_i}}{k}\geq \frac{1+\tau_--\tau_+}{2}\Big].
\]
Combining with Eqn.~\eqref{eq:tmp3t} and Chernoff's bounds, we have
\[
\Pr_{\hat{y}_1\sim\text{B}(\hat{\eta}(\x_1)),\ldots, \hat{y}_k\sim\text{B}(\hat{\eta}(\x_k))}\Big[\sum_{i=1}^k\frac{\hat{y_i}}{k} -\hat{p}\geq\frac{1+\tau_--\tau_+}{2}-\hat{p}\Big]\leq \exp(-k\Delta^2/2).
\]
This follows that
\begin{multline}
{E}_{\hat{y}_1\sim\text{B}(\hat{\eta}(\x_1)),\ldots, \hat{y}_k\sim\text{B}(\hat{\eta}(\x_k))} \Big[\Pr\nolimits_{y\sim\text{B}(\eta(\x))}[I[h^{rk}_{{\hat{S}}_n}(\x)\neq y]]\Big]\\
\leq \eta(\x)+ \exp(-k\Delta^2/2),~~~~~~~~
\end{multline}
and it is noteworthy of \(\eta(\x)=\min\{\eta(\x),1-\eta(\x)\}\) for \(\x\in\E_\Delta^-\).

Similarly, we prove that, for \(\x\in\E_\Delta^+\),
\begin{multline}\label{eq:pftt}
{E}_{\hat{y}_1\sim\text{B}(\hat{\eta}(\x_1)),\ldots, \hat{y}_k\sim\text{B}(\hat{\eta}(\x_k))} \Big[\Pr\nolimits_{y\sim\text{B}(\eta(\x))}[I[h^{rk}_{{\hat{S}}_n}(\x)\neq y]]\Big]\\
\leq 1-\eta(\x)+ \exp(-k\Delta^2/2),~~~~~~~~
\end{multline}
and  it is noteworthy of \(1-\eta(\x)=\min\{\eta(\x),1-\eta(\x)\}\) for \(\x\in\E_\Delta^+\).

Combining with Eqns.~\eqref{eq:tmp0t}-\eqref{eq:pftt}, we have
\begin{eqnarray*}
{E}_{\hat{S}_n\sim\hD^n}[R(h^{rk}_{\hat{S}_n})]&=&{E}_{\hat{S}_n\sim\hD^n}[\Pr_{(\x,y)\sim\D}[h^{rk}_{\hat{S}_n}(\x)\neq y] \\
&\leq&  \Pr_{\x\sim\D_\X}[\x\in \A^r_\Delta]+\exp(-k\Delta^2/2)+R^*_\D+{2rk}/{n}.
\end{eqnarray*}
where \(\Delta=2(1-\tau_+-\tau_-)L\mu\sqrt{d}\). From \(r=1/\mu^d\), we set
\[
\mu=\max\Big\{\frac{\sqrt{\log k/k}}{2(1-\tau_+-\tau_-)L\sqrt{d}},\Big(\frac{k\sqrt{d}}{n(1-\tau_+-\tau_-)L}\Big)^{\frac{1}{d+1}}\Big\}
\]
which completes the proof by simple calculations.\qed

\section{Experiments}\label{sec:exp}
This section verifies theoretical results on synthetic dataset in Section~\ref{sec:exp:syn}, and shows the effectiveness of R\(k\)NN on benchmark datasets in Section~\ref{sec:exp:ben}, followed by parameter analysis in Section~\ref{sec:exp:para}.

\subsection{Synthetic Dataset}\label{sec:exp:syn}
We consider the instance space \(\X=[0,1]^2\), which is similar to synthetic dataset in \citep{Berlind:Urner2015}. Let \(\D_\X\) be a uniform distribution over \(\X\), and  \(\eta(x_1, x_2) = (1 - \sin(2\pi x_1)\sin(2\pi x_2))/2\). We select \(8000\) and \(7000\) examples (i.i.d) for training and testing, respectively. Given noise proportions \((\tau_+,\tau_-)\), the labels of training data are flipped accordingly and independently for \(20\) times with different random seeds, and the average classification error is calculated on test data without noise corruptions.

Figure~\ref{fig:syn}(a) shows that, for asymmetric noises, test error of \(k\)-nearest neighbor does not converge to Bayes error as \(k\) increases, which is nicely in agreement with Theorem~\ref{theorem:noiknn}. Figure~\ref{fig:syn}(b) shows the consistency of \(k\)-nearest neighbor for symmetric noises and large \(k\), as expected in Theorem~\ref{thm:uni-noise}. Figure~\ref{fig:syn}(c) shows the inconsistency of \(1\)-nearest neighbor for symmetric noises as the sample size increases, which verifies Theorem~\ref{thm:nearneighbor} empirically.  Figure~\ref{fig:syn}(d) shows the consistency of our R\(k\)NN approach for asymmetric noises, which presents good supports to Theorem~\ref{thm:ourRkNN}.

\begin{figure*}
\centering
\includegraphics[width=5.4in]{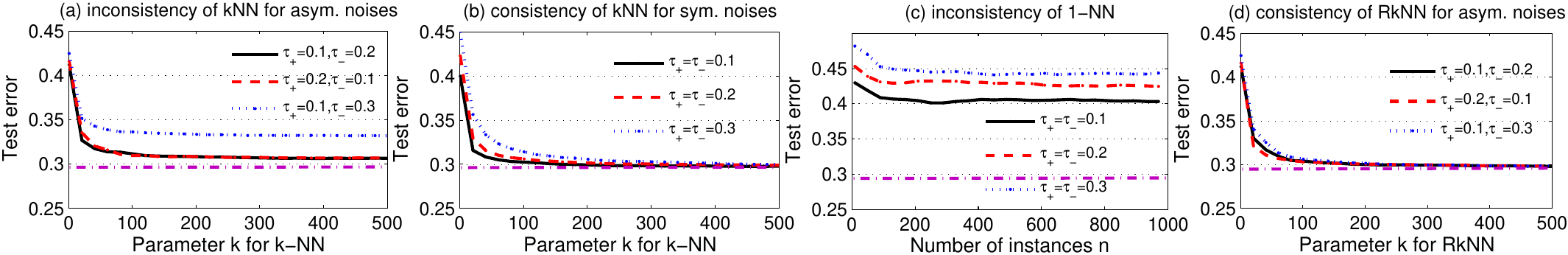}\vspace{-0.1in}
\caption{(a) Convergence of \(k\)-nearest neighbor for asymmetric noises. (b) Convergence of \(k\)-nearest neighbor for symmetric noises. (c) Convergence of \(1\)-nearest neighbor for symmetric noises. (d)~Convergence of our R\(k\)NN for asymmetric noises.
}\label{fig:syn}
\end{figure*}

\subsection{Comparisons on Bechmark Datasets}\label{sec:exp:ben}
We present empirical studies on twenty benchmark datasets\footnote{http://www.ics.uci.edu/{\textasciitilde}mlearn/MLRepository.html}, and the details are summarized in Table~\ref{tab:data}. Most datasets have been used for learning with noisy labels, and the features have been scaled to \([-1,1]\) for all datasets. Multi-class datasets have been transformed into binary ones by randomly partitioning classes into two groups, where each group contains the same cardinality of classes.  We consider three groups of true noise proportions, that is, \((\tau_-,\tau_+)\in\{(0.1,0.2),(0.3,0.1), (0.4,0.4)\}\), and training labels are flipped accordingly with different random seeds.

We evaluate the performance of our R\(k\)NN approach with traditional \(k\)-nearest neighbor \textsf{\(k\)NN}, as well as six state-of-the-art approaches on learning with noisy labels as follows.
\begin{itemize}
  \item \textsf{IR-KSVM}: An importance-reweighting algorithm by kernel hinge-loss method \citep{Liu:Tao2016};
  \item \textsf{IR-LLog}: An importance-reweighting algorithm by linear logistic-loss method \citep{Liu:Tao2016}\footnote{The codes of IR-KSVM and IR-LLog are taken from http://tongliangliu.esy.es};
  \item \textsf{LD-KSVM}: A label-dependent algorithm by kernel hinge-loss method \citep{Natarajan:Dhillon:Ravikumar:Tewari2013};
  \item \textsf{UE-LLog}: An unbiased-estimator algorithm by linear logistic-loss method \citep{Natarajan:Dhillon:Ravikumar:Tewari2013};
  \item \textsf{AROW}: An adaptive regularization of weights \citep{Crammer:Kulesza:Dredze2009};
  \item \textsf{NHERD}: A normal (Gaussian) herd algorithm \citep{Crammer:Lee2010}.
\end{itemize}

\begin{table}
\caption{Benchmark datasets}\label{tab:data}\vspace{0.05in}
\centering
\begin{tabular}{|c|cc||c|cc|}
\hline
\scriptsize datasets &\scriptsize\#inst &\scriptsize\#feat  &\scriptsize datasets &\scriptsize\#inst &\scriptsize\#feat\\
\hline
\scriptsize\textsf{heart}          &\scriptsize270    &\scriptsize13  &\scriptsize\textsf{segment}   &\scriptsize2,310   &\scriptsize19\\
\scriptsize\textsf{ionosphere}     &\scriptsize351    &\scriptsize34  &\scriptsize\textsf{landsat}   &\scriptsize6,435   &\scriptsize36\\
\scriptsize\textsf{housing}        &\scriptsize506    &\scriptsize13  &\scriptsize\textsf{mushroom}  &\scriptsize8,124   &\scriptsize112  \\
\scriptsize\textsf{cancer}         &\scriptsize683    &\scriptsize10  &\scriptsize\textsf{usps}      &\scriptsize9,298   &\scriptsize256 \\
\scriptsize\textsf{diabetes}       &\scriptsize768    &\scriptsize8   &\scriptsize\textsf{pendigits} &\scriptsize10,992  &\scriptsize16 \\
\scriptsize\textsf{vehicle}        &\scriptsize846    &\scriptsize18  &\scriptsize\textsf{letter}    &\scriptsize15,000  &\scriptsize16\\
\scriptsize\textsf{fourclass}      &\scriptsize862    &\scriptsize2   &\scriptsize\textsf{magic04}   &\scriptsize19,020  &\scriptsize10\\
\scriptsize\textsf{german}         &\scriptsize1,000  &\scriptsize24  &\scriptsize\textsf{w8a}       &\scriptsize49,749  &\scriptsize300\\
\scriptsize\textsf{splice}         &\scriptsize1,000  &\scriptsize60  &\scriptsize\textsf{shuttle}   &\scriptsize58,000  &\scriptsize9\\
\scriptsize\textsf{optdigits}      &\scriptsize1,143  &\scriptsize42  &\scriptsize\textsf{acoustic}  &\scriptsize78,823  &\scriptsize50\\
\hline
\end{tabular}
\end{table}

For our R\(k\)NN approach, four-fold cross validation is executed to select predictive parameter \(k\in[5:5:100]\) and noise parameter \(k'\in[5:5:100]\). For \textsf{IR-KSVM} and \textsf{IR-LLog}, we take the default parameters as in \citep{Liu:Tao2016}. For \textsf{LD-KSVM}, we adopt the Gaussion kernels with best width trained by traditional SVM on noise-free data, as introduced in \cite{Natarajan:Dhillon:Ravikumar:Tewari2013}. For \textsf{UE-LLog}, \textsf{AROW} and \textsf{NHERD}, four-fold cross validation is also executed for parameter selections.

{\renewcommand{\arraystretch}{0.86}
\begin{table*}
\caption{Comparison of test accuracy (mean\(\pm\)std.) on benchmark datasets. \(\bullet\)/\(\circ\) indicates that R\(k\)NN is significantly better/worse than the corresponding method (frequent pairwise \(t\)-tests at \(95\%\) significance level).}\label{tab:result}\vspace{0.05in}
\centering\scalebox{0.66}{
\begin{tabular}{|c|c|c|c|c|c|c|c|c|c|}
\hline
\scriptsize datasets &\scriptsize \((\tau_{+}, \tau_{-})\) &\scriptsize Our R\(k\)NN &\scriptsize IR-KSVM &\scriptsize IR-LLog &\scriptsize LD-KSVM &\scriptsize UE-LLog  &\scriptsize AROW  &\scriptsize NHERD &\scriptsize  \(k\)NN\\
\hline

\multirow{3}{*}{\scriptsize\textsf{heart}}
&\scriptsize(0.1, 0.2)  &\scriptsize.8544\(\pm\).0452 &\scriptsize.7941\(\pm\).0318\(\bullet\) &\scriptsize.7088\(\pm\).1302\(\bullet\) &\scriptsize.8000\(\pm\).0362\(\bullet\) &\scriptsize.8029\(\pm\).0533\(\bullet\) &\scriptsize.7721\(\pm\).0451\(\bullet\) &\scriptsize.7721\(\pm\).0525\(\bullet\) &\scriptsize .8353\(\pm\).0458 \(\bullet\)\\
&\scriptsize(0.3, 0.1) &\scriptsize.8706\(\pm\).0403 &\scriptsize.8279\(\pm\).0505\(\bullet\) &\scriptsize.6853\(\pm\).1395\(\bullet\) &\scriptsize.8265\(\pm\).0474\(\bullet\) &\scriptsize.8088\(\pm\).0500\(\bullet\) &\scriptsize.7456\(\pm\).0654\(\bullet\) &\scriptsize.7338\(\pm\).0954\(\bullet\) &\scriptsize .8029\(\pm\).0267 \(\bullet\)\\
&\scriptsize(0.4, 0.4) &\scriptsize.7471\(\pm\).0706 &\scriptsize.5515\(\pm\).1299\(\bullet\) &\scriptsize.6471\(\pm\).1226\(\bullet\) &\scriptsize.6368\(\pm\).1304\(\bullet\) &\scriptsize.6735\(\pm\).0917\(\bullet\) &\scriptsize.6750\(\pm\).0691\(\bullet\) &\scriptsize.6074\(\pm\).1397\(\bullet\) &\scriptsize .7000\(\pm\).0305 \(\bullet\)\\
\hline

\multirow{3}{*}{\scriptsize\textsf{ionosphere}}
&\scriptsize(0.1, 0.2) &\scriptsize.8818\(\pm\).0229 &\scriptsize.8966\(\pm\).0281\(\circ\) &\scriptsize.8205\(\pm\).0363\(\bullet\) &\scriptsize.8875\(\pm\).0323 &\scriptsize.8091\(\pm\).0374\(\bullet\) &\scriptsize.8227\(\pm\).0409\(\bullet\) &\scriptsize.7670\(\pm\).0611\(\bullet\) &\scriptsize .8318\(\pm\).0450\(\bullet\)\\
&\scriptsize(0.3, 0.1)  &\scriptsize.8705\(\pm\).0289 &\scriptsize.8795\(\pm\).0216 &\scriptsize.8284\(\pm\).0353\(\bullet\) &\scriptsize.8841\(\pm\).0232\(\circ\) &\scriptsize.8045\(\pm\).0404\(\bullet\) &\scriptsize.7818\(\pm\).0386\(\bullet\) &\scriptsize.7341\(\pm\).1170\(\bullet\) &\scriptsize .8545\(\pm\).0271\(\bullet\)\\
&\scriptsize(0.4, 0.4) &\scriptsize.7705\(\pm\).0730 &\scriptsize.6727\(\pm\).1025\(\bullet\) &\scriptsize.6989\(\pm\).1025\(\bullet\) &\scriptsize.7341\(\pm\).1137\(\bullet\) &\scriptsize.6727\(\pm\).0923\(\bullet\) &\scriptsize.7102\(\pm\).0981\(\bullet\) &\scriptsize.6227\(\pm\).1653\(\bullet\) &\scriptsize .7932\(\pm\).0246\(\circ\)\\
\hline

\multirow{3}{*}{\scriptsize\textsf{housing}}
&\scriptsize(0.1, 0.2) &\scriptsize.8664\(\pm\).0181 &\scriptsize.8661\(\pm\).0246 &\scriptsize.8701\(\pm\).0145 &\scriptsize.8780\(\pm\).0179\(\circ\) &\scriptsize.8677\(\pm\).0257&\scriptsize.8701\(\pm\).0201 &\scriptsize.8622\(\pm\).0197 &\scriptsize .8513\(\pm\).0223\\
&\scriptsize(0.3, 0.1) &\scriptsize.8693\(\pm\).0250 &\scriptsize.8583\(\pm\).0445 &\scriptsize.8693\(\pm\).0433 &\scriptsize.8677\(\pm\).0356 &\scriptsize.8654\(\pm\).0357 &\scriptsize.8751\(\pm\).0355 &\scriptsize.8614\(\pm\).0347 &\scriptsize .8409\(\pm\).0151\(\bullet\)\\
&\scriptsize(0.4, 0.4) &\scriptsize.8157\(\pm\).0428 &\scriptsize.7756\(\pm\).0476\(\bullet\) &\scriptsize.7874\(\pm\).0609\(\bullet\) &\scriptsize.7173\(\pm\).0687\(\bullet\) &\scriptsize.7976\(\pm\).0393 &\scriptsize.7787\(\pm\).0489\(\bullet\) &\scriptsize.7063\(\pm\).1412\(\bullet\) &\scriptsize .8085\(\pm\).0631\\
\hline

\multirow{3}{*}{\scriptsize\textsf{cancer}}
&\scriptsize(0.1, 0.2) &\scriptsize.9731\(\pm\).0114 &\scriptsize.9673\(\pm\).0133 &\scriptsize.9661\(\pm\).0132 &\scriptsize.9690\(\pm\).0126 &\scriptsize.9567\(\pm\).0146\(\bullet\)  &\scriptsize.9696\(\pm\).0134 &\scriptsize.9690\(\pm\).0110 &\scriptsize .9754\(\pm\).0076\\
&\scriptsize(0.3, 0.1) &\scriptsize.9760\(\pm\).0125 &\scriptsize.9661\(\pm\).0120 &\scriptsize.9561\(\pm\).0181\(\bullet\)  &\scriptsize.9655\(\pm\).0164 &\scriptsize.9503\(\pm\).0223\(\bullet\)  &\scriptsize.9345\(\pm\).0308\(\bullet\)  &\scriptsize.9444\(\pm\).0362\(\bullet\) &\scriptsize .9719\(\pm\).0151\\
&\scriptsize(0.4, 0.4) &\scriptsize.9006\(\pm\).1031 &\scriptsize.9345\(\pm\).0324\(\circ\) &\scriptsize.8953\(\pm\).0370 &\scriptsize.8819\(\pm\).0642\(\bullet\)  &\scriptsize.8725\(\pm\).0670\(\bullet\)  &\scriptsize.9072\(\pm\).0425 &\scriptsize.8830\(\pm\).0499\(\bullet\) &\scriptsize .9135\(\pm\).0877\(\circ\)\\
\hline

\multirow{3}{*}{\scriptsize\textsf{diabetes}}
&\scriptsize(0.1, 0.2)  &\scriptsize.7531\(\pm\).0276 &\scriptsize.7641\(\pm\).0382\(\circ\) &\scriptsize.7464\(\pm\).0379 &\scriptsize.7651\(\pm\).0258\(\circ\) &\scriptsize.7578\(\pm\).0368 &\scriptsize.7603\(\pm\).0227 &\scriptsize.7500\(\pm\).0256 &\scriptsize .7354\(\pm\).0203\(\bullet\)\\
&\scriptsize(0.3, 0.1) &\scriptsize.7429\(\pm\).0361 &\scriptsize.7255\(\pm\).0430\(\bullet\) &\scriptsize.7307\(\pm\).0449 &\scriptsize.7448\(\pm\).0358 &\scriptsize.7505\(\pm\).0332 &\scriptsize.7115\(\pm\).0342\(\bullet\) &\scriptsize.7376\(\pm\).0434 &\scriptsize .7250\(\pm\).0419\(\bullet\)\\
&\scriptsize(0.4, 0.4) &\scriptsize.6923\(\pm\).0659 &\scriptsize.6411\(\pm\).1044\(\bullet\) &\scriptsize.6771\(\pm\).0641 &\scriptsize.6682\(\pm\).0905\(\bullet\) &\scriptsize.6807\(\pm\).0874 &\scriptsize.6914\(\pm\).0769 &\scriptsize.7089\(\pm\).0403 &\scriptsize .6896\(\pm\).0709\\
\hline

\multirow{3}{*}{\scriptsize\textsf{vehicle}}
&\scriptsize(0.1, 0.2) &\scriptsize.9615\(\pm\).0211 &\scriptsize.9624\(\pm\).0186 &\scriptsize.8725\(\pm\).1157\(\bullet\) &\scriptsize.9615\(\pm\).0177 &\scriptsize.9330\(\pm\).0382\(\bullet\) &\scriptsize.9459\(\pm\).0319\(\bullet\) &\scriptsize.8734\(\pm\).1009\(\bullet\) &\scriptsize .9450\(\pm\).0183\(\bullet\)\\
&\scriptsize(0.3, 0.1) &\scriptsize.9505\(\pm\).0230 &\scriptsize.9275\(\pm\).0285\(\bullet\) &\scriptsize.6789\(\pm\).2000\(\bullet\) &\scriptsize.9284\(\pm\).0255\(\bullet\) &\scriptsize.9028\(\pm\).0426\(\bullet\) &\scriptsize.9064\(\pm\).0538\(\bullet\) &\scriptsize.8275\(\pm\).0915\(\bullet\) &\scriptsize .9468\(\pm\).0364\\
&\scriptsize(0.4, 0.4) &\scriptsize.8394\(\pm\).0514 &\scriptsize.7864\(\pm\).1495\(\bullet\) &\scriptsize.6358\(\pm\).1104\(\bullet\) &\scriptsize.7908\(\pm\).1154\(\bullet\) &\scriptsize.7523\(\pm\).0852\(\bullet\) &\scriptsize.8202\(\pm\).0729\(\bullet\) &\scriptsize.7743\(\pm\).1212\(\bullet\) &\scriptsize .8037\(\pm\).0724\(\bullet\)\\
\hline

\multirow{3}{*}{\scriptsize\textsf{fourclass}}
&\scriptsize(0.1, 0.2) &\scriptsize.9968\(\pm\).0038 &\scriptsize.8130\(\pm\).0236\(\bullet\) &\scriptsize.7528\(\pm\).0273\(\bullet\) &\scriptsize.8074\(\pm\).0303\(\bullet\) &\scriptsize.7583\(\pm\).0281\(\bullet\) &\scriptsize.6926\(\pm\).0293\(\bullet\) &\scriptsize.7204\(\pm\).0169\(\bullet\) &\scriptsize .9907\(\pm\).0046 \\
&\scriptsize(0.3, 0.1) &\scriptsize.9977\(\pm\).0024 &\scriptsize.8167\(\pm\).0304\(\bullet\) &\scriptsize.7597\(\pm\).0282\(\bullet\) &\scriptsize.8194\(\pm\).0298\(\bullet\) &\scriptsize.7639\(\pm\).0261\(\bullet\) &\scriptsize.6912\(\pm\).0259\(\bullet\) &\scriptsize.7144\(\pm\).0461\(\bullet\) &\scriptsize .9887\(\pm\).0065\(\bullet\) \\
&\scriptsize(0.4, 0.4) &\scriptsize.8194\(\pm\).0556 &\scriptsize.7093\(\pm\).0580\(\bullet\) &\scriptsize.6769\(\pm\).1031\(\bullet\) &\scriptsize.7347\(\pm\).0755\(\bullet\) &\scriptsize.7361\(\pm\).0468\(\bullet\) &\scriptsize.7056\(\pm\).0285\(\bullet\) &\scriptsize.7139\(\pm\).0317\(\bullet\) &\scriptsize .8128\(\pm\).0589 \\
\hline

\multirow{3}{*}{\scriptsize\textsf{german}}
&\scriptsize(0.1, 0.2) &\scriptsize.7552\(\pm\).0197 &\scriptsize.6692\(\pm\).0259\(\bullet\) &\scriptsize.7603\(\pm\).0259 &\scriptsize.6836\(\pm\).0299\(\bullet\) &\scriptsize.7584\(\pm\).0267 &\scriptsize.6932\(\pm\).0302\(\bullet\) &\scriptsize.6644\(\pm\).0234\(\bullet\) &\scriptsize .7592\(\pm\).0290\\
&\scriptsize(0.3, 0.1)&\scriptsize.7460\(\pm\).0300 &\scriptsize.6932\(\pm\).0278\(\bullet\) &\scriptsize.7340\(\pm\).0309 &\scriptsize.7056\(\pm\).0375\(\bullet\) &\scriptsize.7388\(\pm\).0358 &\scriptsize.6784\(\pm\).0279\(\bullet\) &\scriptsize.6424\(\pm\).0412\(\bullet\) &\scriptsize .7040\(\pm\).0311\(\bullet\)\\
&\scriptsize(0.4, 0.4) &\scriptsize.6704\(\pm\).0523 &\scriptsize.6100\(\pm\).0326\(\bullet\) &\scriptsize.6696\(\pm\).0302 &\scriptsize.5752\(\pm\).0330\(\bullet\) &\scriptsize.6648\(\pm\).0315 &\scriptsize.6128\(\pm\).0415\(\bullet\) &\scriptsize.5824\(\pm\).0563\(\bullet\) &\scriptsize .6944\(\pm\).0122\(\circ\)\\
\hline

\multirow{3}{*}{\scriptsize\textsf{splice}}
&\scriptsize(0.1, 0.2)&\scriptsize.7812\(\pm\).0357 &\scriptsize.7812\(\pm\).0273 &\scriptsize.7612\(\pm\).0388\(\bullet\) &\scriptsize.8080\(\pm\).0318\(\circ\) &\scriptsize.7592\(\pm\).0221\(\bullet\) &\scriptsize.6812\(\pm\).0248\(\bullet\) &\scriptsize.6648\(\pm\).0230\(\bullet\) &\scriptsize .7638\(\pm\).0276\(\bullet\) \\
&\scriptsize(0.3, 0.1) &\scriptsize.7720\(\pm\).0240 &\scriptsize.7384\(\pm\).0307\(\bullet\) &\scriptsize.7524\(\pm\).0355\(\bullet\) &\scriptsize.7764\(\pm\).0315 &\scriptsize.7448\(\pm\).0404\(\bullet\) &\scriptsize.7000\(\pm\).0362\(\bullet\) &\scriptsize.6840\(\pm\).0293\(\bullet\) &\scriptsize .6488\(\pm\).0296\(\bullet\)\\
&\scriptsize(0.4, 0.4)&\scriptsize.6708\(\pm\).0270 &\scriptsize.6180\(\pm\).0287\(\bullet\) &\scriptsize.6428\(\pm\).0318\(\bullet\) &\scriptsize.6180\(\pm\).0271\(\bullet\) &\scriptsize.6288\(\pm\).0460\(\bullet\) &\scriptsize.6012\(\pm\).0356\(\bullet\) &\scriptsize.5740\(\pm\).0392\(\bullet\) &\scriptsize .6587\(\pm\).0344\\
\hline

\multirow{3}{*}{\scriptsize\textsf{optdigits}}
&\scriptsize(0.1, 0.2) &\scriptsize.9990\(\pm\).0017 &\scriptsize.9969\(\pm\).0020 &\scriptsize.9899\(\pm\).0058\(\bullet\) &\scriptsize.9969\(\pm\).0026 &\scriptsize.9612\(\pm\).0121\(\bullet\) &\scriptsize.9969\(\pm\).0020 &\scriptsize.9871\(\pm\).0168\(\bullet\) &\scriptsize .9993\(\pm\).0016\\
&\scriptsize(0.3, 0.1) &\scriptsize.9958\(\pm\).0022 &\scriptsize.9948\(\pm\).0081 &\scriptsize.9720\(\pm\).0140\(\bullet\) &\scriptsize.9969\(\pm\).0035 &\scriptsize.9483\(\pm\).0191\(\bullet\) &\scriptsize.9657\(\pm\).0146\(\bullet\) &\scriptsize.9476\(\pm\).0391\(\bullet\) &\scriptsize .9916\(\pm\).0019\(\bullet\)\\
&\scriptsize(0.4, 0.4) &\scriptsize.9745\(\pm\).0269 &\scriptsize.9587\(\pm\).0461\(\bullet\) &\scriptsize.8084\(\pm\).1398\(\bullet\) &\scriptsize.9682\(\pm\).0204\(\bullet\) &\scriptsize.8517\(\pm\).0502\(\bullet\) &\scriptsize.9269\(\pm\).0402\(\bullet\) &\scriptsize.7892\(\pm\).1281\(\bullet\) &\scriptsize .9685\(\pm\).0231\\
\hline

\multirow{3}{*}{\scriptsize\textsf{segment}}
&\scriptsize(0.1, 0.2) &\scriptsize.8690\(\pm\).0109 &\scriptsize.8649\(\pm\).0136 &\scriptsize.7543\(\pm\).0150\(\bullet\)  &\scriptsize.8626\(\pm\).0160 &\scriptsize.7576\(\pm\).0171\(\bullet\)  &\scriptsize.7604\(\pm\).0178\(\bullet\)  &\scriptsize.7159\(\pm\).0555\(\bullet\)  &\scriptsize .8654\(\pm\).0093\\
&\scriptsize(0.3, 0.1) &\scriptsize.8663\(\pm\).0108 &\scriptsize.8600\(\pm\).0191 &\scriptsize.7356\(\pm\).0143\(\bullet\)  &\scriptsize.8561\(\pm\).0173\(\bullet\)  &\scriptsize.7526\(\pm\).0225\(\bullet\)  &\scriptsize.7543\(\pm\).0189\(\bullet\)  &\scriptsize.7104\(\pm\).0556\(\bullet\) &\scriptsize .8526\(\pm\).0105\(\bullet\) \\
&\scriptsize(0.4, 0.4) &\scriptsize.8123\(\pm\).0269 &\scriptsize.7469\(\pm\).0493\(\bullet\)  &\scriptsize.7057\(\pm\).0192\(\bullet\)  &\scriptsize.7804\(\pm\).0271\(\bullet\)  &\scriptsize.7067\(\pm\).0157\(\bullet\)  &\scriptsize.7145\(\pm\).0269\(\bullet\)  &\scriptsize.6249\(\pm\).0752\(\bullet\) &\scriptsize .7941\(\pm\).0268 \\
\hline

\multirow{3}{*}{\scriptsize\textsf{landsat}}
&\scriptsize(0.1, 0.2) &\scriptsize.9213\(\pm\).0074 &\scriptsize.9183\(\pm\).0076 &\scriptsize.8656\(\pm\).0159\(\bullet\) &\scriptsize.9208\(\pm\).0038 &\scriptsize.8711\(\pm\).0119\(\bullet\) &\scriptsize.8485\(\pm\).0129\(\bullet\) &\scriptsize.8210\(\pm\).0382\(\bullet\) &\scriptsize .9231\(\pm\).0066\\
&\scriptsize(0.3, 0.1)&\scriptsize.9134\(\pm\).0070 &\scriptsize.9119\(\pm\).0139 &\scriptsize.8340\(\pm\).0159\(\bullet\) &\scriptsize.9149\(\pm\).0059 &\scriptsize.8683\(\pm\).0078\(\bullet\) &\scriptsize.8428\(\pm\).0099\(\bullet\) &\scriptsize.7798\(\pm\).0569\(\bullet\) &\scriptsize .9075\(\pm\).0105\(\bullet\)\\
&\scriptsize(0.4, 0.4) &\scriptsize.8701\(\pm\).0132 &\scriptsize.6608\(\pm\).1223\(\bullet\) &\scriptsize.7342\(\pm\).0633\(\bullet\) &\scriptsize.8738\(\pm\).0099 &\scriptsize.7937\(\pm\).0243\(\bullet\) &\scriptsize.8112\(\pm\).0137\(\bullet\) &\scriptsize.6291\(\pm\).1047\(\bullet\) &\scriptsize .8680\(\pm\).0091\\
\hline

\multirow{3}{*}{\scriptsize\textsf{mushroom}}
&\scriptsize(0.1, 0.2)&\scriptsize.9985\(\pm\).0012 &\scriptsize.9975\(\pm\).0023 &\scriptsize.9980\(\pm\).0018 &\scriptsize.9983\(\pm\).0019 &\scriptsize.9900\(\pm\).0040 &\scriptsize.9975\(\pm\).0017 &\scriptsize.9923\(\pm\).0077 &\scriptsize .9987\(\pm\).0014 \\
&\scriptsize(0.3, 0.1) &\scriptsize.9982\(\pm\).0013 &\scriptsize.9922\(\pm\).0079 &\scriptsize.9976\(\pm\).0029 &\scriptsize.9981\(\pm\).0015 &\scriptsize.9891\(\pm\).0072\(\bullet\) &\scriptsize.9976\(\pm\).0018 &\scriptsize.9839\(\pm\).0192\(\bullet\) &\scriptsize .9969\(\pm\).0021 \\
&\scriptsize(0.4, 0.4) &\scriptsize.9750\(\pm\).0093 &\scriptsize.9570\(\pm\).0280\(\bullet\) &\scriptsize.9554\(\pm\).0261\(\bullet\) &\scriptsize.9860\(\pm\).0086\(\circ\) &\scriptsize.9401\(\pm\).0090\(\bullet\) &\scriptsize.9794\(\pm\).0081 &\scriptsize.7800\(\pm\).1429\(\bullet\) &\scriptsize .9647\(\pm\).0099 \\
\hline

\multirow{3}{*}{\scriptsize\textsf{usps}}
&\scriptsize(0.1, 0.2) &\scriptsize.9680\(\pm\).0054 &\scriptsize.9775\(\pm\).0034\(\circ\) &\scriptsize.9007\(\pm\).0099\(\bullet\) &\scriptsize.9782\(\pm\).0027\(\circ\) &\scriptsize.8993\(\pm\).0096\(\bullet\) &\scriptsize.8889\(\pm\).0074\(\bullet\) &\scriptsize.7896\(\pm\).0531\(\bullet\) &\scriptsize .9720\(\pm\).0048\\
&\scriptsize(0.3, 0.1) &\scriptsize.9604\(\pm\).0038 &\scriptsize.9692\(\pm\).0053\(\circ\) &\scriptsize.8699\(\pm\).0108\(\bullet\) &\scriptsize.9724\(\pm\).0039\(\circ\) &\scriptsize.8843\(\pm\).0080\(\bullet\) &\scriptsize.8530\(\pm\).0137\(\bullet\) &\scriptsize.6725\(\pm\).1158\(\bullet\) &\scriptsize .9514\(\pm\).0052\(\bullet\)\\
&\scriptsize(0.4, 0.4) &\scriptsize.8988\(\pm\).0160 &\scriptsize.7388\(\pm\).0227\(\bullet\) &\scriptsize.7437\(\pm\).0342\(\bullet\) &\scriptsize.9005\(\pm\).0124 &\scriptsize.7889\(\pm\).0298\(\bullet\) &\scriptsize.8148\(\pm\).0149\(\bullet\) &\scriptsize.6118\(\pm\).0634\(\bullet\) &\scriptsize .9154\(\pm\).0123\\
\hline

\multirow{3}{*}{\scriptsize\textsf{pendigits}}
&\scriptsize(0.1, 0.2) &\scriptsize.9927\(\pm\).0010 &\scriptsize.9965\(\pm\).0008 &\scriptsize.8360\(\pm\).0039\(\bullet\) &\scriptsize.9974\(\pm\).0007 &\scriptsize.8398\(\pm\).0042\(\bullet\) &\scriptsize.8371\(\pm\).0040\(\bullet\) &\scriptsize.8081\(\pm\).0228\(\bullet\) &\scriptsize .9926\(\pm\).0010\\
&\scriptsize(0.3, 0.1) &\scriptsize.9910\(\pm\).0023 &\scriptsize.9942\(\pm\).0014 &\scriptsize.8194\(\pm\).0083\(\bullet\) &\scriptsize.9955\(\pm\).0013 &\scriptsize.8359\(\pm\).0049\(\bullet\) &\scriptsize.8338\(\pm\).0057\(\bullet\) &\scriptsize.8089\(\pm\).0207\(\bullet\) &\scriptsize .9893\(\pm\).0021\\
&\scriptsize(0.4, 0.4) &\scriptsize.9472\(\pm\).0127 &\scriptsize.8505\(\pm\).0416\(\bullet\) &\scriptsize.6813\(\pm\).0518\(\bullet\) &\scriptsize.9523\(\pm\).0092 &\scriptsize.8086\(\pm\).0123\(\bullet\) &\scriptsize.8198\(\pm\).0053\(\bullet\) &\scriptsize.6606\(\pm\).0586\(\bullet\) &\scriptsize .9369\(\pm\).0156\\
\hline

\multirow{3}{*}{\scriptsize\textsf{letter}}
&\scriptsize(0.1, 0.2) &\scriptsize.9290\(\pm\).0052 &\scriptsize.7805\(\pm\).0049\(\bullet\) &\scriptsize.6754\(\pm\).0068\(\bullet\) &\scriptsize.7647\(\pm\).0049\(\bullet\) &\scriptsize.6748\(\pm\).0046\(\bullet\) &\scriptsize.6723\(\pm\).0076\(\bullet\) &\scriptsize.6686\(\pm\).0109\(\bullet\) &\scriptsize .9284\(\pm\).0066\\
&\scriptsize(0.3, 0.1) &\scriptsize.9219\(\pm\).0058 &\scriptsize.7743\(\pm\).0066\(\bullet\) &\scriptsize.6822\(\pm\).0080\(\bullet\) &\scriptsize.7593\(\pm\).0055\(\bullet\) &\scriptsize.6771\(\pm\).0075\(\bullet\) &\scriptsize.6229\(\pm\).0108\(\bullet\) &\scriptsize.6426\(\pm\).0152\(\bullet\) &\scriptsize .9161\(\pm\).0054\\
&\scriptsize(0.4, 0.4) &\scriptsize.7712\(\pm\).0099 &\scriptsize.6673\(\pm\).0438\(\bullet\) &\scriptsize.5862\(\pm\).0547\(\bullet\) &\scriptsize.7085\(\pm\).0142\(\bullet\) &\scriptsize.6663\(\pm\).0100\(\bullet\) &\scriptsize.6671\(\pm\).0096\(\bullet\) &\scriptsize.6298\(\pm\).0345\(\bullet\) &\scriptsize .7689\(\pm\).0071\\
\hline

\multirow{3}{*}{\scriptsize\textsf{magic04}}
&\scriptsize(0.1, 0.2)&\scriptsize.8315\(\pm\).0054 &\scriptsize.8136\(\pm\).0073\(\bullet\) &\scriptsize.7904\(\pm\).0051\(\bullet\) &\scriptsize.8171\(\pm\).0076\(\bullet\) &\scriptsize.7921\(\pm\).0040\(\bullet\) &\scriptsize.7935\(\pm\).0054\(\bullet\) &\scriptsize.7909\(\pm\).0075\(\bullet\) &\scriptsize .8291\(\pm\).0049\\
&\scriptsize(0.3, 0.1) &\scriptsize.8180\(\pm\).0044 &\scriptsize.8091\(\pm\).0030 &\scriptsize.7723\(\pm\).0045\(\bullet\) &\scriptsize.8121\(\pm\).0035 &\scriptsize.7902\(\pm\).0039\(\bullet\) &\scriptsize.7741\(\pm\).0045\(\bullet\) &\scriptsize.7619\(\pm\).0199\(\bullet\) &\scriptsize .8082\(\pm\).0023\(\bullet\)\\
&\scriptsize(0.4, 0.4) &\scriptsize.7767\(\pm\).0089 &\scriptsize.7340\(\pm\).0118\(\bullet\) &\scriptsize.7444\(\pm\).0186\(\bullet\) &\scriptsize.7666\(\pm\).0239 &\scriptsize.7813\(\pm\).0109 &\scriptsize.7493\(\pm\).0104\(\bullet\) &\scriptsize.7536\(\pm\).0290\(\bullet\) &\scriptsize .7823\(\pm\).0078\\
\hline

\multirow{3}{*}{\scriptsize\textsf{w8a}}
&\scriptsize(0.1, 0.2) &\scriptsize .9805\(\pm\).0015 &\scriptsize .9706\(\pm\).0015\(\bullet\) &\scriptsize .9845\(\pm\).0006 &\scriptsize .9786\(\pm\).0015 &\scriptsize .9588\(\pm\).0135\(\bullet\) &\scriptsize .8852\(\pm\).0030\(\bullet\) &\scriptsize .8695\(\pm\).0132\(\bullet\) &\scriptsize .9805\(\pm\).0014\\
&\scriptsize(0.3, 0.1) &\scriptsize .9807\(\pm\).0008 &\scriptsize .9708\(\pm\).0011\(\bullet\) &\scriptsize .9825\(\pm\).0012 &\scriptsize .9781\(\pm\).0016 &\scriptsize .9614\(\pm\).0127\(\bullet\) &\scriptsize .8897\(\pm\).0025\(\bullet\) &\scriptsize .8829\(\pm\).0089\(\bullet\) &\scriptsize .9803\(\pm\).0015\\
&\scriptsize(0.4, 0.4) &\scriptsize .9769\(\pm\).0073 &\scriptsize .9696\(\pm\).0012 &\scriptsize .9774\(\pm\).0012 &\scriptsize .9720\(\pm\).0011 &\scriptsize .9152\(\pm\).0524\(\bullet\) &\scriptsize .8377\(\pm\).0087\(\bullet\) &\scriptsize .7451\(\pm\).0349\(\bullet\) &\scriptsize .9528\(\pm\).0065\(\bullet\)\\
\hline

\multirow{3}{*}{\scriptsize\textsf{shuttle}}
&\scriptsize(0.1, 0.2) &\scriptsize.9967\(\pm\).0006 &\scriptsize.9559\(\pm\).0060\(\bullet\) &\scriptsize.9200\(\pm\).0117\(\bullet\)  &\scriptsize.9307\(\pm\).0035\(\bullet\) &\scriptsize.8108\(\pm\).0042\(\bullet\) &\scriptsize.8370\(\pm\).0060\(\bullet\) &\scriptsize.8402\(\pm\).0140\(\bullet\) &\scriptsize .9968\(\pm\).0008 \\
&\scriptsize(0.3, 0.1) &\scriptsize.9958\(\pm\).0006 &\scriptsize.9335\(\pm\).0029\(\bullet\) &\scriptsize.8339\(\pm\).0155\(\bullet\) &\scriptsize.9252\(\pm\).0032\(\bullet\) &\scriptsize.8099\(\pm\).0044\(\bullet\) &\scriptsize.8290\(\pm\).0039\(\bullet\) &\scriptsize.8385\(\pm\).0285\(\bullet\) &\scriptsize .9952\(\pm\).0008\\
&\scriptsize(0.4, 0.4) &\scriptsize.9550\(\pm\).0310 &\scriptsize.8415\(\pm\).0030\(\bullet\) &\scriptsize.8056\(\pm\).0030\(\bullet\) &\scriptsize.8451\(\pm\).0119\(\bullet\) &\scriptsize.8005\(\pm\).0119\(\bullet\) &\scriptsize.7987\(\pm\).0109\(\bullet\) &\scriptsize.8273\(\pm\).0250\(\bullet\) &\scriptsize .9696\(\pm\).0046\(\circ\)\\
\hline

\multirow{3}{*}{\scriptsize\textsf{acoustic}}
&\scriptsize(0.1, 0.2) &\scriptsize .7770\(\pm\).0012 &\scriptsize .7663\(\pm\).0033\(\bullet\) &\scriptsize .7547\(\pm\).0039\(\bullet\) &\scriptsize .7638\(\pm\).0036\(\bullet\) &\scriptsize .7619\(\pm\).0033\(\bullet\) &\scriptsize .7536\(\pm\).0028\(\bullet\) &\scriptsize .7151\(\pm\).0629\(\bullet\) &\scriptsize .7726\(\pm\).0016\(\bullet\)\\
&\scriptsize(0.3, 0.1) &\scriptsize .7700\(\pm\).0031 &\scriptsize .7629\(\pm\).0030 &\scriptsize .7477\(\pm\).0058\(\bullet\) &\scriptsize .7609\(\pm\).0030 &\scriptsize .7620\(\pm\).0025 &\scriptsize .7141\(\pm\).0043\(\bullet\) &\scriptsize .6553\(\pm\).0769\(\bullet\) &\scriptsize .7579\(\pm\).0028\(\bullet\)\\
&\scriptsize(0.4, 0.4) &\scriptsize .7575\(\pm\).0061 &\scriptsize .7396\(\pm\).0034\(\bullet\) &\scriptsize .6079\(\pm\).0998\(\bullet\) &\scriptsize .7445\(\pm\).0042\(\bullet\) &\scriptsize .7560\(\pm\).0034 &\scriptsize .7532\(\pm\).0034 &\scriptsize .5470\(\pm\).0888\(\bullet\) &\scriptsize .7111\(\pm\).0083\(\bullet\)\\
\hline

\multicolumn{3}{|c|}{\scriptsize\textsf{win/tie/loss}}   &\scriptsize\bf 35/20/5 &\scriptsize\bf 45/15/0 &\scriptsize\bf 28/25/7 &\scriptsize\bf 47/13/0 &\scriptsize\bf49/11/0 &\scriptsize\bf 53/7/0 &\scriptsize\bf 23/33/4 \\
\hline
\end{tabular}}\vspace{-0.1in}
\end{table*}
}

Notice that we directly take the true noise proportions as priors in the implementations of the first four algorithms \textsf{IR-KSVM}, \textsf{IR-LLog}, \textsf{LD-KSVM} and \textsf{UE-LLog}. For R\(k\)NN approach, however, we use  \(k'\)-nearest neighbor to make estimations of noise proportions \(\hat{\tau}_+\) and \(\hat{\tau}_-\) from the corrupted training datasets. Obviously, it is an unfair comparison for R\(k\)NN. The performances of the compared methods are evaluated by 10 trials of 4-fold cross validation with different random seeds, where the test accuracy is obtained by averaging over 40 runs, as summarized in Table~\ref{tab:result}.

It is evident that R\(k\)NN is better than other four non-kernel algorithms \textsf{IR-LLog}, \textsf{UE-LLog}, \textsf{AROW} and \textsf{NHERD}. The win/tie/loss counts show that R\(k\)NN is clearly superior to these non-kernel algorithms, as it wins for most times and never loses. It is also observable that R\(k\)NN is highly competitive to two kernel methods \textsf{IR-KSVM} and \textsf{LD-KSVM} on most datasets, and R\(k\)NN takes relatively stable performance while two kernel methods drop drastically as noise proportions increase. These observations validate the effectiveness of R\(k\)NN, and the intuitive explanation is that R\(k\)NN makes local corrections on a handful of totally misled examples, whereas the other methods on learning with noisy labels take global adjustments on loss functions, which may be sensitive to random noise. In comparisons with traditional \(k\)NN, our R\(k\)NN achieves better performance for asymmetric noises, and takes comparable performance for symmetric noise as expected.

Besides the frequent pairwise t-test shown in Table~\ref{tab:result}, we also consider Bayesian t-test \citep{Wang:Liu2016} to compare the performance of various algorithms, because our derivations of main results are based on a Bayesian framework. According to Bayesian t-test, the counts of win/tie/loss of our RkNN and compared methods are shown in Table~\ref{tab:Bayesian}. As we can see, Bayesian t-test takes better statistical support than frequent pairwise t-test to verify our proposed RkNN algorithm.

\begin{table*}
\caption{The counts of win/tie/loss of our RkNN and compared methods.}\label{tab:Bayesian}\vspace{0.05in}
\centering
\begin{tabular}{|c|c|c|c|c|c|c|c|c|}
\hline
\scriptsize our RkNN &\scriptsize IR-KSVM  &\scriptsize IR-LLog&\scriptsize LD-KSVM&\scriptsize UE-LLog&\scriptsize AROW &\scriptsize NHERD&\scriptsize kNN\\
\hline
\scriptsize win/tie/loss &\scriptsize39/16/5 &\scriptsize46/12/2 &\scriptsize32/20/8 &\scriptsize48/12/0 &\scriptsize51/8/1 &\scriptsize53/7/0 &\scriptsize 30/25/5 \\
\hline
\end{tabular}
\end{table*}

\begin{table*}
\caption{Average estimated noise proportions \((\hat{\tau}_+,\hat{\tau}_-)\) according to \(k'\)-nearest neighbor in Algorithm~\ref{alg1}.}\label{tab:noise}
\centering
\begin{tabular}{|c|c|c|c|c|c|}
\hline
\scriptsize\((\tau_{+}, \tau_{-})\) &\scriptsize\textsf{heart}&\scriptsize\textsf{ionosphere}&\scriptsize\textsf{housing}&\scriptsize\textsf{cancer} &\scriptsize\textsf{diabetes}\\
\hline
\scriptsize(0.1, 0.2)	&\scriptsize(.050, .143) &\scriptsize (.009, .251)&\scriptsize (.036, .081)&\scriptsize (.013, .091)&\scriptsize(.003, .201)\\
\scriptsize(0.3, 0.1)	&\scriptsize (.258, .039)	&\scriptsize(.154, .115)&\scriptsize(.173, .003)&\scriptsize(.132, .000)&\scriptsize(.142, .098)\\
\scriptsize(0.4, 0.4)	&\scriptsize (.232, .257)	&\scriptsize (.177, .282)&\scriptsize (.200, .198)&\scriptsize(.184, .183)&\scriptsize(.181, .211)\\
\hline
\hline

\scriptsize\((\tau_{+}, \tau_{-})\)&\scriptsize\textsf{vehicle}&\scriptsize\textsf{fourclass}&\scriptsize\textsf{german}&\scriptsize\textsf{splice}
&\scriptsize\textsf{optdigits}\\
\hline
\scriptsize(0.1, 0.2)	&\scriptsize(.005, .053)&\scriptsize(.004, .028) &\scriptsize(.123, .151)&\scriptsize(.107, .157)&\scriptsize(.000, .007)\\
\scriptsize(0.3, 0.1)	&\scriptsize(.126, .020)&\scriptsize(.152, .000)&\scriptsize(.363, .015)&\scriptsize(.292, .025)&\scriptsize(.146, .000)\\
\scriptsize(0.4, 0.4)	&\scriptsize(.196, .225)&\scriptsize(.195, .185)&\scriptsize(.243, .191)&\scriptsize(.215, .180)&\scriptsize(.160, .188)\\
\hline
\hline
\scriptsize\((\tau_{+}, \tau_{-})\) &\scriptsize\textsf{segment}&\scriptsize\textsf{landsat}&\scriptsize\textsf{mushroom}&\scriptsize\textsf{usps} &\scriptsize\textsf{pendigits}\\
\hline
\scriptsize(0.1, 0.2)	&\scriptsize(.001, .020)	&\scriptsize(.000, .014)&\scriptsize(.000, .008)&\scriptsize(.000, .011) &\scriptsize(.000, .015)\\
\scriptsize(0.3, 0.1)	&\scriptsize (.093, .000)	&\scriptsize (.082, .000)&\scriptsize (.071, .000)&\scriptsize 	(.083, .000)&\scriptsize (.095, .000)\\
\scriptsize(0.4, 0.4)	&\scriptsize (.168, .134)	&\scriptsize (.108, .093)&\scriptsize (.112, .104)&\scriptsize(.117, .112)&\scriptsize(.084, .084)\\
\hline
\hline

\scriptsize\((\tau_{+}, \tau_{-})\)&\scriptsize\textsf{letter}&\scriptsize\textsf{magic04}&\scriptsize\textsf{w8a}&\scriptsize\textsf{shuttle} &\scriptsize\textsf{acoustic}\\
\hline
\scriptsize(0.1, 0.2)&\scriptsize(.000, .008) &\scriptsize(.001, .009) &\scriptsize(.060, .105) &\scriptsize(.001, .017)&\scriptsize(.000, .055)\\
\scriptsize(0.3, 0.1)&\scriptsize(.104, .000)&\scriptsize(.093, .002)&\scriptsize(.220, .000)&\scriptsize(.195, .001)&\scriptsize(.128, .000)\\
\scriptsize(0.4, 0.4)&\scriptsize(.093, .087)&\scriptsize(.081, .070)&\scriptsize 	(.275, .251)&\scriptsize 	(.094, .113)&\scriptsize(.135, .130)\\
\hline
\end{tabular}
\end{table*}

Table~\ref{tab:noise} shows the average noise proportions estimated by R\(k\)NN on benchmark datasets. As we can see, the trend of true difference \(\tau_{+}-\tau_{-}\) can be observed from the estimated difference \(\hat{\tau}_{+}- \hat{\tau}_{-}\) in some way, though R\(k\)NN seldom makes precise estimation on noise proportions \(\tau_+\) and \(\tau_-\), particularly for large datasets and small noise proportions. It is also noticed that the R\(k\)NN approach achieves good performance, as shown in Table~\ref{tab:result}, even for rather rough estimation on noise proportions. Those observations further validate the robustness of the R\(k\)NN approach.

\begin{figure}[!t]
\centering
\begin{minipage}{5.4in}
\includegraphics[width=5.4in]{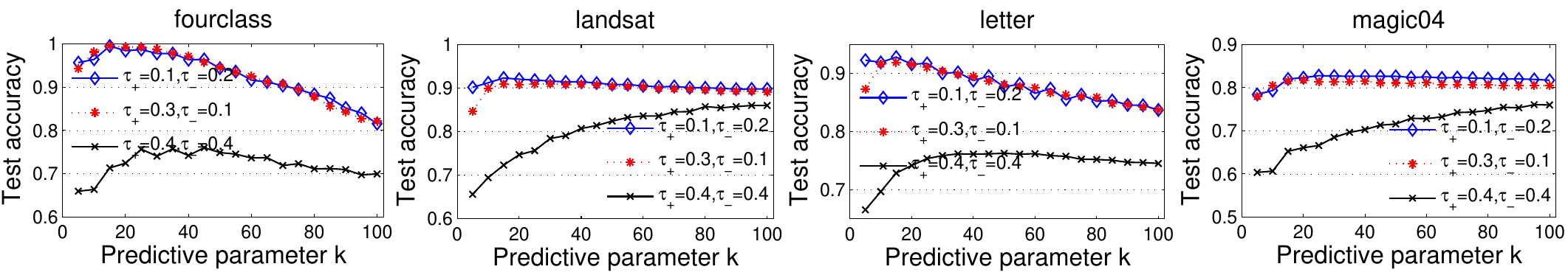}\vspace{-0.3in}\\
\caption{Influence of predictive parameter \(k\)}\label{fig:parak}\vspace{0.1in}
\end{minipage}

\begin{minipage}{5.4in}
\includegraphics[width=5.4in]{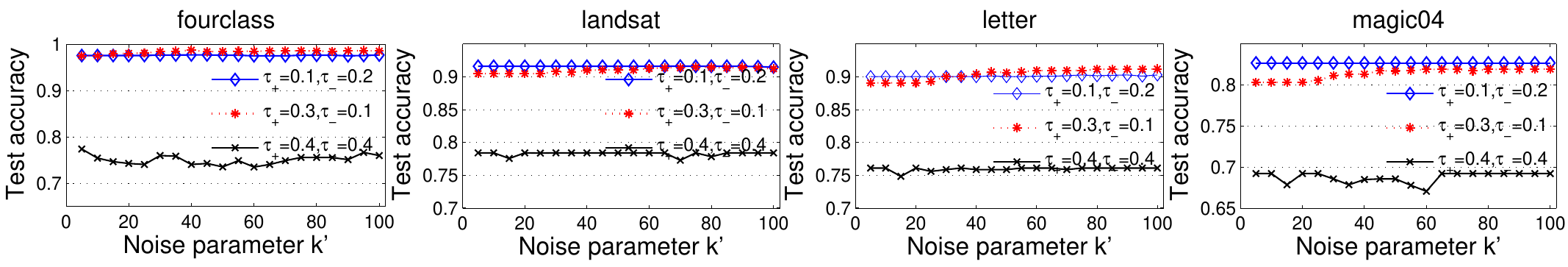}\vspace{-0.3in}\\
\caption{Influence of noise parameter \(k'\)}\label{fig:parak1}\vspace{0.1in}
\end{minipage}

\begin{minipage}{5.4in}
\includegraphics[width=5.4in]{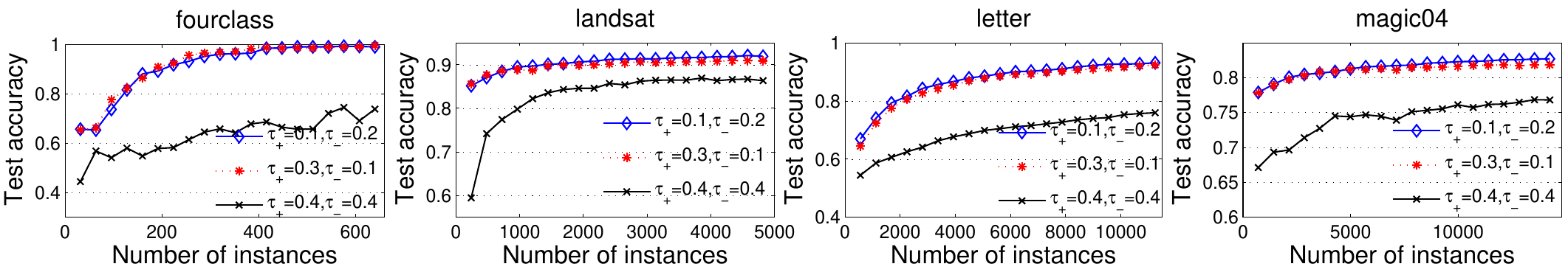}\vspace{-0.3in}\\
\caption{Influence of sample size}\label{fig:convergence}\vspace{0.1in}
\end{minipage}\vspace{-0.1in}
\end{figure}

\subsection{Parameter Influence}\label{sec:exp:para}
We investigate the influence of parameters in this section.  Figure~\ref{fig:parak} shows that R\(k\)NN is not sensitive to the values of predictive parameter \(k\) given that it is not set smaller than 10, and we'd better take large \(k\) when tackle large datasets and high noise proportions. Figure~\ref{fig:parak1} shows that the noise parameter \(k'\) should not be set to value smaller than 20, and there is a relative big range between 20 and 100 where R\(k\)NN achieves better performance. Figure~\ref{fig:convergence} shows the convergence of performance as sample size increases, which illustrates that R\(k\)NN takes stable and convergent performance as expected. Relevant analysis also shows the robustness of R\(k\)NN. Here, we  present empirical analysis of parameters  on four datasets, while the trends are similar on other datasets.

\section{Conclusion}\label{sec:con}
This work presents the finite-sample and distribution-dependent bounds on the consistency of nearest neighbor. The theoretical results show that, for asymmetric noises, \(k\)-nearest neighbor is robust enough to classify most data correctly, except for a handful of examples, whose labels are totally misled by random noises. For symmetric noises, however, \(k\)-nearest neighbor achieves the same consistent rate as that of noise-free setting, which verifies the resistance of \(k\)-nearest neighbor. Motivated from theoretical analysis, we propose the Robust \(k\)-Nearest Neighbor (R\(k\)NN) approach to deal with noisy labels. The basic idea is to make unilateral corrections to examples, whose labels are totally misled by random noises, and classify the others directly by utilizing the robustness of \(k\)-nearest neighbor. Extensive experiments validate the effectiveness of the proposed R\(k\)NN method. An interesting future work is to develop robust \(k\)-nearest neighbor for large-scale and high-dimensional datasets in the random noise setting.

{\bibliography{reference}\bibliographystyle{apalike}}
\end{document}